\documentclass[twoside,11pt]{article}
\usepackage{jair, theapa, rawfonts}
\usepackage{enumitem}
\usepackage[all]{xy}
\usepackage{amsmath} 
\usepackage{amsthm}
\usepackage{amsfonts}
\newtheorem{theorem}{Theorem}
\newtheorem{proposition}[theorem]{Proposition}%
\newtheorem{fact}[theorem]{Fact}
\newtheorem{corollary}[theorem]{Corollary}
\newtheorem{lemma}{Lemma}
\newtheorem{example}{Example}%
\newtheorem{remark}{Remark}%
\newtheorem{definition}{Definition}%

\jairheading{80}{2024}{1407-1436}{11/2023}{08/2024}
\ShortHeadings{Probabilities of the Third Type}
{Weitk\"amper}
\firstpageno{1407}

\begin{document}

\title{Probabilities of the Third Type: Statistical Relational Learning and Reasoning with Relative Frequencies}

\author{\name Felix Weitk\"amper \email felix.weitkaemper@lmu.de \\
       \addr Institut f\"ur Informatik, Ludwig-Maximilians-Universit\"at M\"unchen\\
       Oettingenstr. 67,
       80538 München, Germany }


\maketitle

\begin{abstract}
Dependencies on the relative frequency of a state in the domain are common when modelling probabilistic dependencies on relational data. For instance, the likelihood of a school closure during an epidemic might depend on the proportion of infected pupils exceeding a threshold. Often, rather than depending on discrete thresholds, dependencies are continuous: for instance, the likelihood of any one mosquito bite transmitting an illness depends on the proportion of carrier mosquitoes. Current approaches usually only consider probabilities over possible worlds rather than over domain elements themselves.  An exception are the recently introduced Lifted Bayesian Networks for Conditional Probability Logic, which express discrete dependencies on probabilistic data. We introduce functional lifted Bayesian networks, a formalism that explicitly incorporates continuous dependencies on relative frequencies into statistical relational artificial intelligence, and compare and contrast them with lifted Bayesian networks for conditional probability logic. Incorporating relative frequencies is not only beneficial to modelling; it also provides a more rigorous approach to learning problems where training and test or application domains have different sizes. To this end, we provide a representation of the asymptotic probability distributions induced by functional lifted Bayesian networks on domains of increasing sizes. Since that representation has well-understood scaling behaviour across domain sizes, it can be used to estimate parameters for a large domain consistently from randomly sampled subpopulations. Furthermore, we show that in parametric families of FLBN, convergence is uniform in the parameters, which ensures a meaningful dependence of the asymptotic probabilities on the parameters of the model. 

\end{abstract}

\section{Introduction}

Consider the different flavour of the following two statements:
 ``1\% of the population are suffering from the disease'', which is a statement about the relative frequency of an illness in the population; and
``Considering his symptoms, the likelihood that this patient is suffering from the disease is 20\%'', which is a statement about the degree of confirmation of the assertion that a particular patient has this illness, given the available evidence.

This distinction has first been methodically investigated in a strict logical framework
by \citeA{Carnap50}, who distinguished 
two concepts of probability, the ``degree of confirmation'', which
he calls ``probability\textsubscript{1}'', and the ``relative
frequency (in the long run)'', which he calls ``probability\textsubscript{2}''.
Carnap goes on to formalise probability\textsubscript{1} using a
probability measure defined over so-called \emph{state descriptions},
which we can identify as \emph{possible worlds} in more modern terminology.
Probability\textsubscript{2} on the other hand is interpreted by
the uniform measure on a given domain set itself.

Forty years later, in his seminal paper on the analysis of first-order
logics of probability, Joseph Halpern~\citeyear{Halpern90} divided approaches
to formalising probability in a relational setting along very similar
lines. Halpern refers to logics encoding relative frequencies
as Type I logics, while referring
to logics encoding degrees of belief as Type II logics\footnote{Somewhat unfortunately, this terminology is reverse to that of Carnap mentioned above.}. 
As a distinct category, Halpern also considers
logics that combine both by expressing a degree of belief in a statement that mentions relative frequencies.
He refers to those as Type III logics. Type III logics can express compound statements such as ``With a likelihood of at least 10\%, more than 60\% of the population will have been ill by the end of the year.'' 

In our contribution, we investigate the benefits of a Type III semantics
to the context of statistical relational artificial intelligence.
They include appropriateness to the intended
meaning of queries and a better grasp of extrapolation
behaviour, which in turn facilitates transfer learning across domain sizes.
We then propose functional lifted Bayesian networks (FLBN) as a Type III formalism and compare it with the formalism of lifted Bayesian networks with conditional probability logic (LBN-CPL)~\cite{Koponen20}, which we also identify as being of Type III.
After briefly summarising the learning algorithms that are available or adaptable to FLBN, we give a formal account of their extrapolation behaviour with increasing domain size, and explain how this addresses transfer learning between differently sized domains.
Indeed, we show that every FLBN is asymptotically equivalent to a particularly simple, quantifier-free lifted Bayesian network, which has well-understood benign scaling behaviour with respect to domains of increasing size.
Finally, we also show that for natural parametric families, convergence is uniform in the parameter space.
This makes our results applicable to situations such as parameter learning, where rather than a concrte single model, the parametric family itself is the object of study.    

\subsection{Queries Relating to Degrees of Belief vs Relative Frequency}

 Picking up the thread of disease modelling, we outline how probabilistic models and queries fit into the context of probability types. 

\begin{example}
For a domain of people and a graph of connections between them, consider
a model that represents how connected individuals infect each other.
A Type II query would ask ``what is the likelihood that a given individual
is infected at time $t$ (possibly given evidence)''.

This is clearly an interesting problem on this domain.
However, a main focus of epidemiologic modelling are adaptive interventions.
A trigger is set (such as ``1\% of the population
are infected'') and then some intervention is performed (such as
``schools are closed'') as soon as that trigger is reached.

Such trigger conditions that refer to relative frequencies
are very common.  (see
\cite[Table II]{BissetCDFMM14} for further examples). 
This naturally also leads to Type III queries, in which the likelihood
of a certain frequency event is addressed: ``How likely is it
that 1\% of the population will be infected within
the next four weeks?''
\end{example}

We discuss two different Type III formalisms, both of which incorporate Type I expressions into a Type II framework of lifted Bayesian networks. 
They are distinguished by the type of dependencies that they are designed to model. 
The framework of LBN-CPL~\cite{Koponen20} is built around the Type I language `Conditional Probability Logic', in which conditions such as ``At least 5\% of pupils in school $s$ are diagnosed'' can be expressed. Probabilities of relations or propositions at a child node would then depend on which condition is satisfied:
Often, however, dependencies are not discrete --- instead, they are of the form ``the higher (or lower) the proportion of $R$, the more likely is $Q$''. An example would be a mosquito-borne disease, in which the transmission risk of a single bite is directly proportional to the relative frequency of disease carriers in the mosquito population.
Therefore we introduce FLBN, where the probability of a child relation is a continuous function of the relative frequencies of parent nodes. 

\subsection{Asymptotics, Transfer Learning and Extrapolation }
We will see that systematically using Type I probabilities within an outer framework of lifted Bayesian networks also addresses a pertinent problem in parameter learning for statistical relational representations: discrepancies in domain size between training and test or deployment sets. Such a discrepancy could occur in different settings. On the one hand, it could be a deliberate choice since learning can be considerably more expensive than inference (which is known to be NP-hard in general~\cite{DalviS12}). Therefore, sampling small subsets of a complete dataset and then training the parameters on the sampled subsets could be much more efficient than learning the parameters on the entire set. This is recommended by the authors of the MLN system Tuffy~\cite{Tuffy11}, for instance. On the other hand, the size of the test set might be variable or unknown at training time. 

It is well-known, however, that in general the parameters that are optimal on a randomly sampled subset are not optimal on the larger set itself. 

\begin{example}\label{example:RLR}
Consider the typical example~\cite{PooleBKKN14} of a relational logistic regression with two unary relations $R$ and $Q$, and an underlying DAG $R(x) \longrightarrow Q(y)$.  For any $b \in D$, the probability of $Q(b)$ is given by 
$\mathrm{sigmoid} ( w * \lvert\{a \in D \mid R(a)\} \rvert )$,
where $w$ is a parameter to be learned from data. Now consider a training set of domain size 1000 in which 100 elements each satisfy $R$ and $Q$. Now assume that we sample a subset of 100 elements in which 10 elements each satisfy $R$ and $Q$. The optimal parameter on that subset would be a $w$ for which $\mathrm{sigmoid}(w * 10) = 10/100$, which turns out to be around -0.21. On the original set, however, the optimal parameter satisfies  $\mathrm{sigmoid}(w * 100) = 100/1000$, which  is around -0.021. Indeed, if we would transfer the original parameter to the larger set, it would predict a probability for $Q(y)$ of less than $10^{-9}$! 
\end{example}

\citeA{JaegerS18} showed that for certain projective families of distributions such a sampling approach provides a statistically consistent estimate of the optimal parameters on the larger set. However, projectivity is a very limiting condition; In fact, the projective fragments of common statistical relational frameworks that are isolated by \citeA{JaegerS18} are essentially propositional and cannot model any interaction between an individual and the population-at-large. For example, to make the relational logistic regression above projective,  the probability of $Q(a)$ must only depend on whether $R(a)$ is true, and not on any other elements of the domain. 
We will show that despite their larger expressivity Type III frameworks can be meaningfully approximated by projective distributions on large domains, allowing us to leverage the statistical consistency results for projective families of distributions.


We discuss two Type III formalisms for statistical relational AI. First we establish the Type I language of conditional probability logic (CPL), which is extended to a Type III framework called lifted Bayesian networks, following  \citeA{Koponen20}. We then propose a second, novel framework, able to express continuous rather than discrete dependencies on Type I probabilities. 

\subsection{Related Work}

As
\citeA{MuggletonC08} and \citeA{Schulte12} have noted, the
vast majority of statistical relational frameworks in use today are
of Halpern Type II --- they allocate a probability to each possible
world.

This includes Markov Logic Networks (MLN),
Probabilistic Logic Programming under the distribution semantics (PLP)
and approaches based on lifting Bayesian networks.
To give an idea of how these formalisms work, we provide with every formalism a small toy model of the following scenario:
There is a domain of people who can have a property of being ``social'' or not.
This is a unary relation, and should be positively correlated with how many ``friends'' (a binary relation) a person has. 

In their simplest form, MLNs~\cite{RichardsonD06} are given by a
set of first-order formulas $\varphi_{i}$ in a signature $\sigma$
annotated with real-valued weights $w_{i}$, as well as a domain $D$.
Then a probability measure over the set of all possible worlds $\mathfrak{X}$
on $D$ (i.\ e. $\sigma$-structures with domain $D$) is defined by
setting 
\begin{align}
\mathcal{P}(X=\mathfrak{X}):=\frac{1}{Z}\exp\left(\underset{i}{\sum}w_{i}n_{i}(\mathfrak{X})\right) \nonumber
\end{align}
where $n_i(\mathfrak{X})$ is the number of true groundings of $\varphi_i$ in $\mathfrak{X}$ and $Z$ is a normalisation constant ensuring that the
probabilities of all possible worlds sum to $1$. 
\begin{example}
  Consider the MLN with formulas $\varphi_1 := \mathrm{Social}(x) \land \mathrm{Friends}(x,y)$, $\varphi_2 := \mathrm{Social}(x)$,  $\varphi_3 := \mathrm{Friends}(x,y)$.
  They are associated with real-valued weights $w_1$, $w_2$ and $w_3$ respectively, where $w_1$ will be positive to affect a positive correlation between being social and having more friends.
  The other weights $w_2$ and $w_3$ can be set to any real number to calibrate the overall probability of socialness and friendship among domain individuals;
  if they are set to zero, then the positively weighted $\varphi_1$ will ensure that the probability of both friendship and socialness are above 50\%. 
\end{example}
  
The distribution semantics for PLP, first introduced in essence by \citeA{Poole93} and then named and explicitly proposed as a semantics for probabilistic logic programming by \citeA{Sato95} 
is given by a stratified logic program 
over independently distributed probabilistic ground facts on a domain $D$.
More precisely, let $\rho\subseteq\sigma$
be signatures and let $\alpha_{i}\in[0,1]$ for atoms $R_{i}(\vec{x})$
from $\rho$. Let $\Pi$ be a stratified Datalog program with
extensional signature $\rho$ and intensional signature $\sigma$.
Then every $\rho$-structure $\mathfrak{Y}$ on $D$ induces a $\sigma$-structure
$\mathfrak{X}$ on $D$ obtained by evaluating $\Pi$ with input $\mathfrak{Y}$.
In this way, $\Pi$ lifts the probability distribution on $\rho$-structures
given by making independent choices $R_{i}(\vec{x})$ of $\rho$-atoms
with probabilities $\alpha_{i}$, 
\begin{align}\nonumber
\mathcal{P}(X=\mathfrak{Y}):=
\left(\underset{a\in D,R_{i}\in\rho,\mathfrak{Y}\models R_{i}(\vec{a})}{\prod}\alpha_{i}\right)\left(\underset{a\in D,R_{i}\in\rho,\mathfrak{Y}\models\neg R_{i}(\vec{a})}{\prod}\left(1-\alpha_{i}\right)\right)
\end{align}
to a probability distribution on $\sigma$-structures with domain
$D$.
\begin{example}
  As is typical in probabilistic logic programming, modelling our example scenario requires us to expand the alphabet to include auxiliary terms that express the randomness inherent in the model. 
  At least two different probabilistic logic programs could be constructed for such a situation, depending on whether we model socialness to influence friendship or vice versa.
  In the latter case, the extensional signature is $\rho = \{\mathrm{Friends}, \mathrm{Makes\_social}\}$ and the intensional signature is given by $\rho \cup \{\mathrm{Social}\}$,
  and our logic program could have the form
  \[
  \mathrm{Social}(x) \leftarrow \mathrm{Friends}(x,y), \mathrm{Makes\_social}(y,x).
  \]
  along with probabilities $\alpha_1$ and $\alpha_2$ for the relation symbols $\mathrm{Friends}$ and $\mathrm{Makes\_social}$ respectively.
  Here, $\alpha_1$ sets the probability for any two individual to be social, and $\alpha_2$ models the probability of any individual friendship to cause a person to be considered social.
  These are considered to be independent causes for socialness, so if an individual has $n$ friends, then the probability for him to be social can be seen to be
  \[
  1 - \left(1 - \alpha_2\right)^n.
  \]
\end{example}
Note that by dividing the signature into an extensional and an intensional part, probabilistic logic programming cleanly separates the logical from the probabilistic part.
The probabilistic parts consists of mere probabilistic facts with no probabilistic relationships between them,
while the logical part consists of completely deterministic rules layered on top of those facts.
The expressivity needed for modelling real-world applications is recovered by introducing auxiliary relations as illustrated in the example above.  

Approaches based on lifting Bayesian networks, such as Relational
Bayesian Networks (RBN)~\cite{Jaeger02}, Bayesian Logic Programs
(BLP)~\cite{MuggletonC08} and Relational Logistic Regression (RLR)
\cite{KazemiBKNP14}, provide a template for a Bayesian network on
any given domain $D$, with a node for every possible ground atom $R(\vec{a})$,
$\vec{a}\in D$. The probability of every possible world is then defined
in the manner usual for Bayesian networks. 
We briefly sketch the RLR formalism as an example: 
Here, the nodes of the DAG are given by atoms of $\sigma$, and every node $Q(\vec{x})$ is annotated with a list  ${(\varphi_i, w_i)}_i$ of formulas whose relations are taken from the parents of $Q(\vec{a})$  and real-values weights $w_i$. The probability of $Q(\vec{a})$ given a grounding of its parents is as follows:
\begin{align}
\mathcal{P}(Q \left(\vec{a}) \right) := \mathrm{sigmoid} \left( \underset{i}{\sum} w_i n_i \right)  \nonumber
\end{align}
where $n_i$ is the number of true groundings of the formula $\varphi_i$.
\begin{example}
  The structure of the relational logistic regression network is subject to similar choices as the structure of the probabilistic logic program for the same scenario;
  we again choose to model a dependency of being considered social on having friends, rather than vice versa.
  This implies the underlying DAG structure to be $\mathrm{Friends}(x,y) \longrightarrow \mathrm{Social}(x)$,
  with a single atomic formula $\mathrm{Friends(x,y)}$ annotating the node $\mathrm{Social}(x)$ and an associated positive weight $w$.
  The probability of an individual being social is then given by $\mathrm{sigmoid}(nw)$,
  where $n$ is the number of friends that individual has.  
\end{example}

As a generalisation of stochastic grammars, Stochastic
Logic Programs (SLP) are very different
to the approaches above. Rather than providing a probability distribution
over possible worlds, they define a distribution over derivations
of a goal in a logic programming setting. Since deriving a goal
equates to querying a Herbrand base, this can be seen as defining
a distribution within that model. Therefore, SLPs can be classified
as Type I~\cite{MuggletonC08}.

A detailed exposition of all these formalisms, with a special focus on probabilistic logic programming, can be found in the recent textbook by \citeA{Riguzzi23}.

More explicitly of Type I is the class-based semantics for parametrised Bayesian networks suggested by \citeA{SchulteKKGZ14}. Syntactically, they are similar to the template Bayesian networks mentioned above, but probabilities are defined without grounding to any specified domain. Instead, they are interpreted as arising from a random choice of substitutions that instantiate the template nodes.

\citeA{Kern-IsbernerT10} discuss the integration of statistical probabilities into subjective beliefs in a more traditional context, in which different distributions satisfy a probabilistic knowledge base and then a maximum entropy criterion is used to specify a unique distribution.
However, \citeA{Jaeger02} provides for a `mean' aggregation function for
RBN and \citeA{Weitkaemper21} investigates scaled Domain-size
Aware RLR (DA-RLR) in which parameters are scaled with domain size.
Both formalisms induce a dependency on relative frequency (Type I
probability) of domain atoms, and we will see in Subsection \ref{ExpressFLBN}
below how those approaches are subsumed by our Type III framework.

\citeA{Koponen20} introduced {\em conditional probability logic (CPL)\/} as a language for expressing Type I probabilities, and defined {\em lifted Bayesian networks (LBN-CPL)\/} in which the conditional probability tables depend on such CPL formulas.
Therefore, LBN-CPL can clearly be identified as a Type III formalism.

However, unlike the statistical relational approaches mentioned above, LBN-CPL only encode discrete dependencies; rather than specifying an aggregation function, a node in a LBN-CPL is equipped with a finite set of probabilities triggered by CPL-formulas specifying relative frequency thresholds.
For instance, one could specify that the probability of a person contracting an illness depends on whether more than 1\% of the population are infected, but one cannot specify a continuous dependency on the percentage of people infected.

\section{Lifted Bayesian Networks and Conditional Probability Logic}\label{CPLLBN}

As CPL is
an extension of classical first-order logic, we will assume the syntax
and semantics of first-order logic as displayed in textbooks such as \citeA{EbbinghausFT94} to be known, and just briefly rehearse the terminology we employ here. 

\paragraph{Notation}We use the term \emph{relational signature} to mean a (possibly multi-sorted) relational signature (without equality). In other words, a relational signature is a set of relation symbols $R$, each of which is annotated with a finite list of sorts.
A (finite) \emph{domain} for such a relational signature consists of a finite set for every sort.
Then a \emph{sort-appropriate tuple} $\vec{a}$ for a relation symbol $R$ is a finite list of elements of the sets for those sorts given in the corresponding sort list of $R$.
Similarly, a sort-appropriate tuple for a finite list of sort variables is a list of elements of those sort domains given in the corresponding list of sort variables.
A \emph{$\sigma$-structure} on a domain is determined by a function that maps each relation symbol $R \in \sigma$ to a set of sort-appropriate tuples. When a first-order formula with free variables $\vec{x}$ is \emph{satisfied} by a sigma-structure with respect to a choice of sort-appropriate tuple for $\vec{x}$ is determined by the usual rules of first-order logic.
Given a $\sigma$-structure $\mathfrak{X}$ on a domain $D$, if $\varphi(\vec{x})$ is a first-order formula with free variables $\vec{x}$, we use $\lvert\lvert \varphi(\vec{x}) \rvert\rvert_{\vec{x}}$ to denote the fraction of sort-appropriate tuples from $D$ for which  $\varphi(\vec{x})$ holds. This is precisely the Type I probability or relative frequency of $\varphi(\vec{x})$ in $\mathfrak{X}$.

\subsection{Conditional Probability Logic }

CPL formulas over $\sigma$ are defined
inductively as follows, where the new
constructor $\left\Vert \varphi\mid\psi\right\Vert _{\vec{y}}$ should
be read as ``The conditional (Type-I-)probability that $\varphi(\vec{y})$
holds given that $\psi(\vec{y})$ is known to hold.''

:
\begin{definition}[\citeA{Koponen20}]\label{Def:CPLform}Let $\sigma$ be a (possibly multi-sorted) relational
signature (without equality). Then the set of \emph{conditional probability
formulas} over $\sigma$ is defined as follows:
\end{definition}
\begin{enumerate}
\item For every relation symbol $R$ in $\sigma$ of arity $n$ and appropriate
terms (i.e. variables or constants of the correct sorts) $t_{1},\ldots,t_{n}$,
$R\left(t_{1},\ldots,t_{n}\right)$ is a conditional probability formula.
\item If $\phi,\psi$ are conditional probability formulas, then $\neg\varphi,\varphi\wedge\psi,\varphi\vee\psi$
and $\varphi\rightarrow\psi$ are also conditional probability formulas.
\item If $\varphi$ is a conditional probability formula and $x$ is a variable,
then $\forall_{x}\varphi$ and $\exists_{x}\varphi$ are also conditional
probability formulas. 
\item For any $r\in\mathbb{R}_{+}$, conditional probability formulas $\varphi,\,\psi,\,\theta$
and $\tau$, and a tuple of distinct variables $\vec{y}$, the following
are conditional probability formulas:
\[
r+\left\Vert \varphi\mid\psi\right\Vert _{\vec{y}}\geq\left\Vert \theta\mid\tau\right\Vert _{\vec{y}}
\]
\[
\left\Vert \varphi\mid\psi\right\Vert _{\vec{y}}\geq\left\Vert \theta\mid\tau\right\Vert _{\vec{y}}+r
\]
\end{enumerate}
In ordinary first-order logic, a variable is \emph{bound} if it is
in the range of an existential or universal quantifier. In conditional
probability logic, a variable is also bound if it is in the range
of a construction of the form $\left\Vert \varphi\mid\psi\right\Vert _{\vec{y}}\geq\left\Vert \theta\mid\tau\right\Vert _{\vec{y}}+r$
or $\left\Vert \varphi\mid\psi\right\Vert _{\vec{y}}+r\geq\left\Vert \theta\mid\tau\right\Vert _{\vec{y}}$. 

The semantics for CPL are an extension of
the ordinary semantics of first-order logic.  \citeA{Koponen20} only defines the semantics on finite structures,
where the Type I probability measure on the domain is given by the
counting measure. This is also the setting most relevant to statistical
relational learning and reasoning.
\begin{definition}[\citeA{Koponen20}]
Let $\sigma$ be a relational signature, let $\mathfrak{X}$ be a
finite $\sigma$-structure (in the sense of ordinary first-order logic) on domain $D$. We define what it means for a conditional
probability formula $\varphi$ to \emph{hold in} $\mathfrak{X}$ with
respect to any sort-respecting interpretation of variables $\iota$,
written as $\mathfrak{X}\models_{\iota}\varphi$. Note that Clauses
1 through 3 of Definition \ref{Def:CPLform} are taken from ordinary
first-order logic. Therefore, we can copy the recursive definition
of atomic formulas and connectives directly from the corresponding
clauses for first-order logic. So assume that $\models_{\iota}$ has
been defined for $\varphi,\,\psi,\,\theta$ and $\tau$. Let $\iota_{\vec{y}:\vec{b}}$
be the variable interpretation obtained from $\iota$ by mapping $\vec{y}$
to $\vec{b}$ and otherwise following $\iota$. Furthermore, for any
term $t$ let $D_{t}$ be the domain of the sort of $t$. We then
define $\lvert\varphi\rvert_{\vec{y},\iota}$ to be the cardinality
of $\left\{ \vec{b}\in\underset{y\in\vec{y}}{\prod}D_{y}\mid\mathfrak{X}\models_{\iota_{\vec{y}:\vec{b}}}\varphi\right\} $. 

We set $\mathfrak{X}\models_{\iota}r+\left\Vert \varphi\mid\psi\right\Vert _{\vec{y}}\geq\left\Vert \theta\mid\tau\right\Vert _{\vec{y}}$
if and only if $\lvert\tau\rvert_{\vec{y},\iota}>0$, $\lvert\psi\rvert_{\vec{y},\iota}>0$
and $r+\frac{\lvert\varphi\wedge\psi\rvert_{\vec{y},\iota}}{\lvert\psi\rvert_{\vec{y},\iota}}\geq\frac{\lvert\theta\wedge\tau\rvert_{\vec{y},\iota}}{\lvert\tau\rvert_{\vec{y},\iota}}$. 

Analogously, $\mathfrak{X}\models_{\iota}\left\Vert \varphi\mid\psi\right\Vert _{\vec{y}}\geq\left\Vert \theta\mid\tau\right\Vert _{\vec{y}}+r$
if and only if $\lvert\tau\rvert_{\vec{y},\iota}>0$, $\lvert\psi\rvert_{\vec{y},\iota}>0$
and $\frac{\lvert\varphi\wedge\psi\rvert_{\vec{y},\iota}}{\lvert\psi\rvert_{\vec{y},\iota}}\geq\frac{\lvert\theta\wedge\tau\rvert_{\vec{y},\iota}}{\lvert\tau\rvert_{\vec{y},\iota}}+r$. 
\end{definition}
We introduce some intuitive shorthands: 
\begin{definition}
We write $\left\Vert \varphi\right\Vert _{\vec{y}}$ for $\left\Vert \varphi\mid\mathrm{true}\right\Vert _{\vec{y}}$,
expressing the unconditional probability of $\varphi(\vec{y})$, and
we write $r\geq\left\Vert \theta\mid\tau\right\Vert _{\vec{y}}$ and
$\left\Vert \varphi\mid\psi\right\Vert _{\vec{y}}\geq r$ for $r+\left\Vert \mathrm{false}\mid\mathrm{true}\right\Vert _{\vec{y}}\geq\left\Vert \theta\mid\tau\right\Vert _{\vec{y}}$
and $\left\Vert \varphi\mid\psi\right\Vert _{\vec{y}}\geq\left\Vert \mathrm{false}\mid\mathrm{true}\right\Vert _{\vec{y}}+r$
respectively.
We will refer to the set of all conditional probability formulas over a relational signature $\sigma$ as $\mathrm{CPL}(\sigma)$.
\end{definition}
CPL can express
the trigger functions in epidemic modelling that we had mentioned
in the previous section: for instance, ``at least 1\% of domain individuals
are infected'' is simply $\left\Vert \mathrm{Infected}(x)\right\Vert _{x}\geq0.01$.
Using conditional probabilities, more complex relationships can also
be expressed: ``at least 5\% of pupils at school $s$ are infected''
can be expressed as $\left\Vert \mathrm{Infected}(x)\mid\mathrm{Pupil}(x,s)\right\Vert _{x}\geq0.05$.
As an example utilising the full syntax including nested probability
quantifiers, we can even extend this to ``School $s$ is at least
at the median among schools in area $a$ by proportion of infected
pupils'': 
\begin{align}
0.5\geq&\Big\Vert \Vert \mathrm{Infected}(x)\Vert\mathrm{Pupil}(x,s)\Vert _{x} \nonumber \\
&\geq\Vert \mathrm{Infected}(x)\Vert\mathrm{Pupil}(x,y)\Vert _{x}\Big\Vert\mathrm{Located}(y,a)\Big\Vert _{y} \nonumber
\end{align}

While the particular syntax and the explicit treatment of \emph{conditional} probabilities is particular to CPL, it is similar in spirit to Halpern's~\citeyear{Halpern90} two-sorted Type I language and Keisler's~\citeyear{Keisler85} logic with probability quantifiers.  

\subsection{Lifted Bayesian Networks for Conditional Probability Logic}
To extend this Type I logic to a Type III representation and to integrate it with the independence assumptions from Bayesian networks, we follow  \citeA{Koponen20} in introducing \emph{Lifted Bayesian Networks}. 
\begin{definition}[\citeA{Koponen20}]
A \emph{Lifted Bayesian Network for Conditional Probability Logic (LBN-CPL)} over a relational signature $\sigma$  consists of the following:

\begin{enumerate}
\item An acyclic directed graph (DAG) $G$ with node set $\sigma$.

\item For each $R \in \sigma$ a finite tuple of formulas ${\big(\chi_{R, i}(\vec{x})\big)}_{i \leq \nu_R} \in \mathrm{CPL}(\mathrm{par}(R))$, where $\mathrm{par}(R)$ is the signature of the $G$-parents of $R$,  $\vec{x}$ is a sort-appropriate tuple of the correct length for $R$, such that
$\forall \vec{x} \big( \bigvee_{i = 1}^{\nu_R} \chi_{R, i}(\vec{x})\big)$ is valid (i.e. true in all $\mathrm{par}(R)$-structures) and if
$i \neq j$ then $\exists \vec{x} \big(\chi_{R, i}(\vec{x}) \wedge \chi_{R, j}(\vec{x})\big)$
is unsatisfiable. Such a tuple is called a \emph{partition}.

\item For each $R \in \sigma$ and each associated formula, 
a number denoted $\mu(R \ \mid \ \chi_{R, i})$ or $\mu(R(\vec{x}) \ \mid \ \chi_{R, i}(\vec{x}))$
in the interval $[0, 1]$.
\end{enumerate}

\end{definition}

The semantics of lifted Bayesian networks are defined by grounding to a Bayesian network with respect to a given domain.
We can view a $\sigma$-structure with a finite domain $D$ as a choice of truth value for each $R(\vec{a})$, where $R$ is a relation symbol in $\sigma$ and $\vec{a}$ is a tuple of elements of $D$ of the right length and the right sorts for $R$.
Therefore, defining a probability distribution over the space of possible $\sigma$-structures with domain $D$ is equivalent to defining a joint probability distribution over the $R(\vec{a})$, viewed as binary random variables.

\begin{definition}[\citeA{Koponen20}]
Consider an LBN-CPL $\mathfrak{G}$ and a finite domain $D$. 
Then the probability distribution induced by $\mathfrak{G}$ on the set of $\sigma$-structures with domain $D$ is given by the following Bayesian network: 
The nodes are given by $R(\vec{a})$, where $R$ is a relation symbol in $\sigma$ and $\vec{a}$ is a tuple of elements of $D$ of the right length and the right sorts for $R$.
There is an edge between two nodes  $R_1(\vec{a})$ and  $R_2(\vec{b})$ if there is an edge between $R_1$ and $R_2$ in the DAG $G$ underlying $\mathfrak{G}$.
It remains to define a probability table for every node $R(\vec{a})$:
Given a choice of values for $P(\vec{b})$ for all $P \in \mathrm{par}(R)$ and appropriate tuples $\vec{b}$ from $D$, the probability of $R(\vec{a})$ is set as $\mu(R \ \mid \ \chi_{R, i})$ for the unique $ \chi_{R, i}$ true for $\vec{a}$. This is well-defined as the truth of a CPL formula in a structure only depends on the interpretation of the relation symbols occurring in the formula, and since for every structure and every choice of $\vec{a}$ there is a unique $ \chi_{R, i}$ true for $\vec{a}$ by assumption.
\end{definition}

\section{Defining Functional Lifted Bayesian Networks}\label{defn}

While LBN-CPL enable us to express Type III conditions, they are intrinsically categorical: They do not allow the probability of  $R(\vec{a})$ to vary as a \emph{continuous} function of the Type-I-probabilities of first-order statements. Therefore we introduce FLBN, which are designed to do just that.

\begin{definition}
A \emph{functional lifted Bayesian network (FLBN)} over a relational signature $\sigma$ consists of the following:
\begin{enumerate}
\item  A DAG $G$ with node set $\sigma$.

\item For each $R \in \sigma$ a sort-appropriate tuple of variables $\vec{x}$ whose length is the arity of $R$ and a finite tuple  ${(\chi_{R, i}(\vec{x},\vec{y}))}_{i \leq n_R}$ of first-order $\mathrm{par}(R)$-formulas.

\item For each  $R \in \sigma$ a continuous function $f_R : {[0,1]}^{n_R} \rightarrow [0,1]$.
\end{enumerate}
 \end{definition}

The intuition behind FLBN is that for sort-appropriate ground terms $\vec{a}$, the probability of $R(\vec{a})$ is given by the value of $f$ applied to the tuple $(\left\Vert \chi_{R,i}\right\Vert _{\vec{y}})$.  Note that unlike in in LBN-CPL, the dependence on frequencies lies \emph{outside} the individual formulas, which are themselves purely first-order.

\begin{definition} 
Consider an FLBN $\mathfrak{G}$ and a finite domain $D$. 
Then the probability distribution induced by $\mathfrak{G}$ on the set of $\sigma$-structures with domain $D$ is given by the following Bayesian network: 
The nodes are given by $R(\vec{a})$, where $R$ is a relation symbol in $\sigma$ and $\vec{a}$ is a tuple of elements of $D$ of the right length and the right sorts for $R$.
There is an edge between two nodes  $R_1(\vec{b})$ and  $R_2(\vec{a})$ if there is an edge between $R_1$ and $R_2$ in the DAG $G$ underlying $\mathfrak{G}$.
It remains to define a conditional probability table for every node $R(\vec{a})$:
Given a choice of values for $P(\vec{b})$ for all $P \in \mathrm{par}(R)$ and appropriate tuples $\vec{b}$ from $D$, the probability of $R(\vec{a})$ is set as  $f_R({(\left\Vert \chi_{R,i}(\vec{a},\vec{y})\right\Vert _{\vec{y}})}_{i \leq n_R})$.
\end{definition}

We also draw attention to the degenerate case where the formulas $\chi_{R, i}$ have no free variables beyond $\vec{x}$.
Note that in this case the Type I probability can only take the values 1 and 0, depending on whether  $\chi_{R, i}$ is true or false. 
Thus, the only relevant function values of $f_R$ are those of tuples of 0 and 1.
In this way, it is possible to specify discrete dependencies on first-order formulas within the model.

Such a situation will always occur at the root nodes $R$ of the network. Since the parent signature is empty, ``true'' and ``false'' are the only possible formulas and therefore specifying the function $f_R$ for root nodes is equivalent to specifying a single probability $\mu(R)$. 

\begin{definition}
  A special case of degenerate FLBN are {\em quantifier-free lifted Bayesian networks}, which are those FLBN in which every formula $\chi_{R, i}$ is a quantifier-free formula all of whose variables are contained in the tuple $\vec{x}$.
\end{definition}
\begin{remark}\label{rem:qflbn}
Since in the grounding of a quantifier-free lifted Bayesian network, the entries of the conditional probability table for $R(\vec{a})$ only depend on $\chi_{R, i}(\vec{a})$, which is a Boolean combination of atomic formulas involving $\vec{a}$, the semantics given above can be simplified in this case by only including edges from $R_1(\vec{b})$ to  $R_1(\vec{a})$ if every entry in $\vec{b}$ also occurs in $\vec{a}$. 
\end{remark}

Under different names, quantifier-free lifted Bayesian networks are ubiquituous in statistical relational AI.
They correspond exactly to Jaeger's \citeyear{JaegerS18} relational Bayesian networks without aggregation functions or to determinate probabilistic logic programs.
In both cases, they are the only fragment of these languages known to be projective~\cite{JaegerS18}, and in the case of logic programs, the converse is also known to hold~\cite{Weitkaemper21a}.   
\section{Discussion of Type III Formalisms}
In this section we will discuss how LBN-CPL and FLBN can be used to express dependencies beyond the existing Type II formalisms, which learning algorithms are supported and how they enable transfer learning across domains of different sizes.
\subsection{Expressivity of Lifted Bayesian networks for Conditional Probability Logic}\label{ExpressLBN} 
Continuing the running example of infectious disease dynamics, CPL allows the expression of various trigger conditions. LBN-CPL allow us to express the actions taken when those conditions are met. Overall, one can model each of the policy decisions summarised by~\cite[Table II]{BissetCDFMM14} in LBN-CPL.
It seems at first that the requirement to list the probabilities for every case is very limiting.
For instance, it is not immediately clear how to express noisy-or, perhaps the most widespread aggregation function in current use. 
However, this apparent limitation can be overcome by introducing an auxiliary relation that captures the individual probability of every instance of the parent to influence the child node, a technique well-known in probabilistic logic programming under the distribution semantics to model probabilistic rules with probabilistic facts and determinate rules
\begin{example}
As an example, consider a model in which each of your infected contacts has an equal likelihood of 10\% to infect you with an illness. This can be expressed by the following network:
\[
\xymatrix{\mathrm{Contact}\ar[d] & \mathrm{Infectious}\ar[d]\\
\mathrm{Influences}\ar[r] & \mathrm{Infected}
}
\] 
Here, the auxiliary predicate "Influences" was introduced, which has a probability of 0.1  if and only if "Contact" was true and 0 otherwise. Then, "Infected" can simply be expressed as a deterministic dependency, with probability 1  if $\exists_q : (\mathrm{Influences}(x,y) \wedge \mathrm{Infected(y)}$ and probability 0 otherwise.
\end{example}

\begin{example}\label{exam:LBN-CPL}
 As an example combining several general features of statistical relational modelling with CPL, consider a policy where schools and workplaces are shut whenever there is either a diagnosed positive case in that school or workplace or when 0.1\% of the population are diagnosed with the disease. 
Additionally, there is a higher chance of contact between two people if they attend the same open school or workplace.
This can all be expressed in an LBN-CPL over a two-sorted signature as follows:
\[
\xymatrix{\mathrm{Is\_infectious/(person)}\ar[d]\ar@/_{7pc}/[dddd] & \mathrm{Attends/(person,place)}\ar[dddl]\ar[dd]\\
  \mathrm{Is\_diagnosed/(person)}\ar[dr]\\
 & \mathrm{Is\_open/(place)}\ar[dl]\\
  \mathrm{Contact/(person,person)}\ar[d]\\
\mathrm{Is\_infected/(person)}
}
\]

Assume that ``Attends'' is given by supplied data. Depending on whether this network is just one component in an iteration, where ``Is\_infectious'' depends on the ``Is\_infected'' of a previous time step, or stands alone, Is\_infectious might be given by data. Alternatively, it might be stochastically modelled, with a certain fixed probability. 
Then the conditional probabilities of the other four relations can be expressed as follows:
$\mathrm{Is\_diagnosed}(x)$ has a given probability  $p_1$ if $\mathrm{Is\_infectious(x)}$ is true, and a (lower) probability $p_2$ otherwise.

$\mathrm{Is\_open}(w)$ is deterministic, with probability 1 if 
\begin{align}
&\exists_x (\mathrm{Is\_diagnosed(x)} \wedge \mathrm{Attends}(x,w)) \nonumber \\
\vee &\left\Vert \mathrm{Is\_diagnosed}(x)\right\Vert _{x}\geq 0.01 \nonumber
\end{align}
 is true, and with probability 0 otherwise.

$\mathrm{Contact}(x,y)$ models contact close enough to transmit the disease. It could be set at a probability $p_3$ if 
\begin{align} 
\exists_w (\mathrm{Is\_open}(w) \wedge  \mathrm{Attends}(x,w) \wedge  \mathrm{Attends}(y,w)) \nonumber
\end{align}
and at a (lower) probability $p_4$ otherwise. The values of $p_3$ and $p_4$ will be varied depending on the transmissivity of the disease and the social structure of the population.  

Finally, $\mathrm{Is\_Infected}(x)$ can now be seen as a deterministic dependency, with probability 1 if  
\begin{align} 
\exists_y (\mathrm{Contact}(x,y) \wedge  \mathrm{Is\_Infected}(y)) \nonumber
\end{align}
 and probability 0 otherwise.
\end{example}

We conclude this subsection by showing that every LBN-CPL can be expressed by one containing only probabilistic facts and deterministic rules expressed in CPL. This is not known to hold for FLBN.
More precisely:

\begin{proposition}\label{distrosem}
  For every LBN-CPL $\mathfrak{G}$ over a signature $\sigma$ there is another LBN-CPL $\mathfrak{G}'$  in a signature $\sigma ' \supseteq \sigma$ such that $\mathfrak{G}$ and $\mathfrak{G}'$ induce the same probability distributions of $\sigma$-structures, and the following holds for $\mathfrak{G}'$:

  Every relation $R$ in $\sigma '$ is either a root node or a child of only root nodes, and in the latter case, all probabilities $\mu(R \ \mid \ \chi_{R, i})$ associated with $R$ are either 0 or 1.
\end{proposition}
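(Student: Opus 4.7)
The plan is to mirror the standard distribution-semantics transformation used in probabilistic logic programming: every probabilistic choice in $\mathfrak{G}$ is exteriorised as an independent Bernoulli root predicate, and the original relations in $\sigma$ are re-expressed as deterministic children of those roots. Concretely, I introduce, for each non-root $R \in \sigma$ and each cell $\chi_{R,i}$ of its CPL partition, a fresh relation symbol $R_i^{*}$ with the same sort-signature as $R$, and I set $\sigma' = \sigma \cup \{R_i^{*} : R \in \sigma \text{ non-root in } G,\ i \leq \nu_R\}$. Each $R_i^{*}$ is made a root of $G'$ with the constant probability $\mu(R \mid \chi_{R,i})$. Original roots of $G$ are kept as roots of $G'$ with their original probabilities.

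Each non-root $R \in \sigma$ is then re-installed in $G'$ as a deterministic node with the two-cell partition obtained from
\[
\theta_R(\vec{x}) \ := \ \bigvee_{i=1}^{\nu_R} \bigl(\chi_{R,i}^{\dagger}(\vec{x}) \wedge R_i^{*}(\vec{x})\bigr),
\]
assigning $\theta_R$ probability $1$ and $\neg\theta_R$ probability $0$. Here $\chi_{R,i}^{\dagger}$ denotes the CPL formula obtained by recursively substituting, for each non-root $P \in \sigma$ occurring in $\chi_{R,i}$, the defining formula $\theta_P$ previously constructed. Processing the nodes of $G$ in a topological order guarantees that by the time $R$ is treated, every non-root parent has already been rewritten purely in terms of original roots of $G$ and auxiliary predicates, so $\chi_{R,i}^{\dagger}$ is a CPL formula whose relation symbols are all roots of $G'$. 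The parents of $R$ in $G'$ are exactly those root symbols together with $R_1^{*},\dots,R_{\nu_R}^{*}$, giving the desired structural property; the new partition has only two cells and all its probabilities lie in $\{0,1\}$.

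The remaining task is to verify that $\mathfrak{G}'$ induces the same distribution on $\sigma$-structures as $\mathfrak{G}$ on every finite domain $D$. By induction along the topology of $G$, each non-root $R \in \sigma$, after conditioning on a grounding of $\mathrm{par}_G(R)$ satisfying a particular $\chi_{R,i}(\vec{a})$, receives in $\mathfrak{G}'$ the conditional probability $\mu(R \mid \chi_{R,i})$ at $\vec{a}$: the cells $\chi_{R,i}$ are mutually exclusive so exactly one disjunct of $\theta_R(\vec{a})$ is active, and that disjunct fires iff the independent Bernoulli variable $R_i^{*}(\vec{a})$ is true. Because the auxiliaries introduced for distinct $(R,i)$ and distinct $\vec{a}$ are all fresh and independent, marginalising them out recovers exactly the Bayesian-network factorisation of $\mathfrak{G}$. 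The main technical obstacle is the substitution step: one has to check that recursively inlining $\theta_P$ into a CPL formula (with appropriate alpha-renaming to avoid capture by quantifiers and probability operators) still yields a syntactically valid CPL formula over $\mathrm{par}_{G'}(R)$, and that the resulting partition genuinely meets the validity and mutual-exclusivity requirements of Definition~2. Once that bookkeeping is carried out, the equality of distributions reduces to a routine product-measure calculation.
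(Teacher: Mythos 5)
Your proposal is correct and follows essentially the same route as the paper's proof: exteriorising each conditional probability $\mu(R \mid \chi_{R,i})$ as an independent Bernoulli root predicate of the same arity and sorts as $R$, redefining $R$ deterministically via $\bigvee_i(\chi_{R,i}\wedge R_i^{*})$, and inlining these definitions along a topological order so that eventually only root symbols appear as parents. Your two-cell partition $\{\theta_R,\neg\theta_R\}$ is in fact a slightly cleaner way to secure mutual exclusivity than the paper's explicit extra cell $\bigvee_i \neg P_{R,i}$, which as literally stated can overlap with the cells $\chi_{R,i}\wedge P_{R,i}$.
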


\begin{proof}
The construction is reminiscent of the expression of Bayesian logic programs by the distribution semantics ~\cite[Section 4]{RiguzziS18}. 
For every relation symbol $R$ of $\sigma$ and every $\chi_{R, i}$, we add a new relation symbol $P_{R,i}$ to $\sigma$ of the same arity and sorts as $R$. This relation symbol is added as a root node and annotated with probability  $\mu(R \mid \chi_{R, i})$. We also add an edge from this new node to $R$.  Then we replace ``$\mu(R \mid \chi_{R, i})$'' with the number ``1'' and replace $\chi_{R, i}$ with $\chi_{R, i} \wedge P_{R,i}$.  We finally add a new formula $\chi_{R,\nu_{R}+1} := \underset{i}{\bigvee}(\neg P_{R,i})$ and set $\mu(R \mid \chi_{R,\nu_{R}+1})$ to 0. 
As we iterate this construction through all nodes,  we can successively replace any mention of a non-root relation symbol with its definition, which is given by $\underset{i}{\bigvee}  (\chi_{Q, i}\wedge P_{Q,i})$ and redraw the edges accordingly. Eventually, every relation symbol mentioned will be at the root of the DAG, as required. 
\end{proof} 
This is an interesting parallel with PLP and shows that while LBN-CPL are defined using a graphical model as a backbone, they could alternatively have been defined in the style of ``independent probabilistic facts and determinsitic rules.'' Note, however, that unlike in the case of PLP, the new relation $P(R_i)$ that we introduce has just the same arity and sorts as $R$, and is not in the scope of any quantifier. 

\subsection{Expressivity of Functional Lifted Bayesian Networks}\label{ExpressFLBN}
Functional lifted Bayesian networks can directly encode continuous dependencies on the relative frequency of features among a population.

\begin{example}
  Following on from the the theme of public health, assume now that a team of government health inspectors are tasked with evaluating whether an establishment satisfies public health regulations.
  They visit the establishment and make independent reviews, which are then taken into account by the local governing body.
  This situation could be modelled an FLBN in a two-sorted signature on the following DAG:
  \[
  \xymatrix{\mathrm{Meets\_standards/(place)}\ar[d] & \mathrm{Experienced/(person)}\ar[dl]\\
    \mathrm{Positive\_review/(person, place)}\ar[d]\\
    \mathrm{Passes/(place)}}
  \]
  Consider the options for the individual nodes in turn:

  In this model, Whether a place actually meets the standard or not is stochastic, in that a certain proportion of establishments meet the required standards and some don't.
  Similarly, there is a certain chance for any individual reviewer to be experienced.
  As these nodes do not have any parents, there are no further choices to be made beyond specifying their probabilities of occurrence.

  Whether a reviewer gives an establishment a positive review should depend most strongly on whether it actually deserves such a review, i.e. whether it meets the required standards.
  We can model this straightforwardly with the atomic formula $\mathrm{Meets\_standards(y)}$, which for any given place will give rise to a proportion of either 0 or 1.
  However, we also want to take into account that inexperienced reviewers are more likely to make mistakes and
  either give an undeserved positive review or withhold a positive review despite the standards being met.
  So, we use the second formula $\mathrm{Experienced(x)}$ and use both as input to our aggregation function.
  Although as written we demand continuity of our aggregation function, in this case, the input is a pair of Boolean values.
  Therefore, we can simply prescribe probabilities of writing a passing review for each of the 4 combinations of in/experienced reviewers and places meeting the standards or not.

  Considering the final node $\mathrm{Passes}$, we would like the probability of passing to depend on the proportion of positive reviews received.
  Depending on the real-life circumstances, we could apply a variety of continuous aggregation functions to this, but assume for now that we decide on the sigmoid function as an appropriate model.
  This would give rise to a single formula $\mathrm{Positive\_review}(x,y)$, aggregated over the person variable $x$, and an aggregation function
  $f:[0,1] \rightarrow [0,1]$, $f\left(\alpha\right) = \mathrm{sigmoid}\left(w(\alpha - p_o)\right)$,
  where a large positive $w$ would imply that it is almost guaranteed that a place passes muster if and only if the proportion of positive reviews it receives exceeds $p_0$.

  FLBNs also allow us to consider other factors in our decision.
  For instance, whether to give or not to give a pass to a place may depend not only on its own reviews in isolation,
  but on how those reviews compare to the average of reviews across establishments.
  To model this, take a second formula $\mathrm{Positive\_review}(x',y')$ which is aggregated over both variables.
  Its associated relative frequency models the overall average of all reviews across reviewers and places.
  The aggregation function would then be binary to accommodate this new information, and could for instance be given by
  $f(\alpha,\beta) = \mathrm{sigmoid}\left(w(\alpha - \beta + p_0)\right)$,
  which means (at positive values of $w$) that a place is likely to pass if its proportion of  positive reviews does not lie more than $p_0$ below the proportion of positive reviews across all establishments.

  Finally, one could instead consider that regardless of how large the domain of possible inspectors,
  having only positive reviews will always be significantly different than at least one inspector noting concerns.
  To model this, we simply add the additional formula $\forall_x\mathrm{Positive\_review(x,y)}$.
  This will always evaluate to 0 or 1, and can be added as an additional factor with an additional weight $w_2$,
  so that $f\left(\alpha,\beta\right) = \mathrm{sigmoid}\left(w_1(\alpha - p_o) + w_2\right)$.

  For any given pair of domains for persons and places, this FLBN will ground to a Bayesian network with one node for each ground atom,
  and an arrow between two nodes whenever there is an arrow between their relation symbols in the DAG above.
\end{example}

Continuous dependencies on relative frequencies are particularly important because they form the basis of the regression models from statistics.  From this point of view, one could see FLBN as a general framework for relational regression models contingent on Type I probabilities as data.

First we consider linear regression.
A linear regression model with Type-I-probabilities as data corresponds to the following families of functions:
\begin{align}\label{linear}
f (x_1, \ldots x_{n_R}) = \frac{w_1 x_1 + \cdots + w_n x_n}{n} + c      
\end{align}
where $w_1$ to $w_n$ and $c$ are coefficients that have to be chosen in such a way that the image of $[0,1]$ under $f$ is contained in $[0,1]$.
These families of functions suffice e.g. to express the ``arithmetic mean'' combination function in the relational Bayesian networks of \citeA{Jaeger97}.

As the target variable is binary, link functions are commonly used in so-called \emph{generalised linear models}~\cite{NelderW72} to transform the output of a linear regression into an element of the unit interval. 
Its functions are of the form 
\begin{align}\label{logistic}
f (x_1, \ldots x_n) = s(w_1 x_1 + \cdots + w_n x_n + c)       
\end{align}
where $s:\mathbb{R} \rightarrow [0,1]$ is a continuous function.
Common choices for $s$ are the sigmoid function  
\begin{align}
  s(x) = \frac{\exp(x)}{\exp(x) + 1} \nonumber
\end{align}
which leads to logistic regression, the cumulative density function of the normal distribution, which yields the probit model, and the asymmetric  function 
\begin{align}
    s(x) = 1-\exp(-\exp(x))  
\end{align}
which is related to the Poisson model.
The logistic regression model in particular specialises to the DA-RLR of \citeA{Weitkaemper21}. 

By recovering these existing frameworks and combination functions as special cases of FLBN, they are in scope of the rigorous analysis of asymptotic behaviour in Subsection \ref{asymptotics}.

Beyond those formalisms, functional lifted Bayesian networks can model a variety of other aggregation functions.
Consider for instance a unary random variable $R$ that depends on how far the proportion of $Q$s is from an optimum value $p$.
This can be modelled by a function
\begin{align}
  f(x) = \alpha e^{-\beta (x-p)^2}
\end{align}

where $\alpha < 1$ is the probability of $R$ when the proportion of $Q$ is optimal and $\beta$ determines how quickly the probability of $R$ drops as the proportion of $Q$ deviates from $p$. 

\section{Asymptotic Analysis of the Extrapolation Behaviour}\label{asymptotics}
We present a full analysis of the asymptotic behaviour of FLBN.\@
The setting is as follows:
On every finite domain $D$, a functional lifted Bayesian network  $\mathfrak{G}$  over a signature $\sigma$ induces a probability distribution $\mathbb{P}_{\mathfrak{G},D}$ on the set $\Omega_D$ of  $\sigma$-structures with domain $D$. The first thing to note is that the names of the elements of $D$ do not matter; all relevant information lies in the cardinalities of the sorts of $D$. Therefore we will assume from now on that our domain sorts consist of initial segments of the natural numbers, and we write $\mathbb{P}_{\mathfrak{G},\vec{n}}$ for the probability distribution on the sorts with $\vec{n}$ elements.
In an asymptotic analysis, we are interested in the limit of these probability distributions as the domain sizes of $D$ tend to infinity. If $\sigma$ is multi-sorted, we require the size of every sort to tend to infinity. More precisely, a sequence $(D_i)_n, i \in I, n \in \mathbb{N}$ of sorts tends \emph{uniformly to infinity} if there are $\delta_i \in \mathbb{R}$ for all $i \in I$ such that $\lvert {D_i}_n\rvert = \delta_in$ for all $n \in \mathbb{N}$. A technical difficulty here is that strictly speaking, the probability distributions are defined on different sets $\Omega_{\vec{n}}$, so it is unclear on which measure space a limit would even be defined. To be precise, we consider the measure space $\Omega_\infty$ given by all $\sigma$-structures with domain sorts $\mathbb{N}$. It is endowed with the $\sigma$-algebra generated by \emph{generating sets} of the following form: 
``Those $\sigma$-structures $\mathfrak{X}$ such that the $\sigma$-substructure of $\mathfrak{X}$ with domain $a_1 , \ldots a_m$ is given by $\mathfrak{Y}$'' for a tuple of domain elements  $a_1 , \ldots a_m$ and a $\sigma$-structure $\mathfrak{Y}$. 
This suffices to give a probability to any query about finite domains that we might possibly be interested in. 
On such a generating set, all but finitely many $\mathbb{P}_{\mathfrak{G},\vec{n}}$ are defined; indeed, $\mathbb{P}_{\mathfrak{G},\vec{n}}$ gives a probability to any structure with domain  $a_1 , \ldots , a_m$ as long as every $a_i$ is bounded by the entry of $\vec{n}$ corresponding to its sort. Furthermore, the probability of the generating sets completely determine the probability distribution on the measure space itself.
More formally, we can proceed as follows:
\begin{proposition}\label{prop:mass}
  Let $\Omega_\infty$ be the set of all $\sigma$-structures with domain sorts $\mathbb{N}$, and let $\mathfrak{H}$ be the system of subsets of $\Omega_\infty$ consisting of finite disjoint unions of sets of the form
  ``Those $\sigma$-structures $\mathfrak{X}$ such that the $\sigma$-substructure of $\mathfrak{X}$ with domain $a_1 , \ldots a_m$ is given by $\mathfrak{Y}$'' for a tuple of domain elements  $a_1 , \ldots a_m$ and a $\sigma$-structure $\mathfrak{Y}$ (\emph{generating sets}.
  Let $\mathbb{P}$ be a premeasure on $\mathfrak{H}$. Then $\mathbb{P}$ extends uniquely to a measure on the $\sigma$-algebra generated by $\mathfrak{H}$, and it is uniquely determined already by its values on the generating sets.
\end{proposition}
\begin{proof}
  One can easily check that $\mathfrak{H}$ defines a \emph{semiring of sets}, and thus the existence statement follows from Caratheodory's Extension Theorem~\cite[13.4.13]{Deitmar21}. As $\Omega_\infty$ can be written as a finite union of sets from $\mathfrak{H}$, this extension is also unique~\cite[16.1.5]{Deitmar21}. 
\end{proof}

We can make the following definitions:
\begin{definition}
 Two formulas of CPL with the same free variables $\vec{x}$, $\varphi$ and $\psi$, are \emph{asymptotically equivalent} over a lifted Bayesian network $\mathfrak{G}$  if  for any sequence $D_k$ of domains which tend uniformly to infinity, $\underset{k \rightarrow \infty}{\lim}\mathbb{P}_{\mathfrak{G},D_k}\left(\forall_{\vec{x}}\left( \varphi(\vec{x}) \leftrightarrow \psi(\vec{x})\right)\right) = 1 $

A probability distribution $\mathbb{P}_\infty$ on $\Omega_\infty$ is the \emph{asymptotic limit} of a lifted Bayesian network $\mathfrak{G}$ if for any sequence $D_k$ of domains  which tend uniformly to infinity and any generating set $A$ the limit of  $\mathbb{P}_{\mathfrak{G},\vec{n}}(A)$ equals $\mathbb{P}_\infty(A)$.
Two functional lifted Bayesian networks $\mathfrak{G}$ and $\mathfrak{G}'$ are \emph{asymptotically equivalent} if they share the same asymptotic limit $\mathbb{P}_\infty$. 
\end{definition}  

The discussion above does not imply, of course, that a given functional lifted Bayesian network actually has an asymptotic limit in that sense. 
However, there is a class of lifted Bayesian networks where the asymptotic limit is clear: those that define \emph{projective families of distributions}. 
\begin{definition}[\citeA{JaegerS18,JaegerS20}]
A family of probability distributions $(\mathbb{P}_{\vec{n}})$ on $\Omega_{\vec{n}}$ is \emph{projective} if for every generating set $A$ of $\Omega_\infty$ the sequence  $(\mathbb{P}_{\vec{n}}(A))$ is constant whenever it is defined. 
\end{definition}
Since clearly every constant sequence converges, every lifted Bayesian network inducing a projective family of distributions has an asymptotic limit by Proposition \ref{prop:mass}. Furthermore, if two families of distributions are asymptotically equivalent and both projective, they must in fact be equal. 
This leads us to the following observation:
\begin{proposition}\label{qFLBN are projective}
If $\mathfrak{G}$ is a \emph{quantifier-free lifted Bayesian network}, then $\mathfrak{G}$ induces a projective family of distributions. In particular, $\mathfrak{G}$ has an asymptotic limit.
\end{proposition}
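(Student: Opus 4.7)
The plan is to exploit the key structural feature of quantifier-free FLBNs noted just after the definition: in the grounded Bayesian network, every parent of a node $R(\vec{a})$ has the form $P(\vec{b})$ with every entry of $\vec{b}$ appearing in $\vec{a}$. This means that the subset of ground atoms whose arguments come from a fixed finite set is ancestrally closed in the grounded DAG, so its marginal distribution can be computed without ever referring to the rest of the domain.

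Concretely, I would fix an arbitrary generating set $A$ of $\Omega_\infty$, specified by a finite tuple of domain elements $a_1, \ldots, a_m$ (with prescribed sorts) and a $\sigma$-structure $\mathfrak{Y}$ on $\{a_1, \ldots, a_m\}$. Let $\vec{n}$ be any sort vector large enough that every $a_i$ lies in the corresponding initial segment. I would then consider the set $S$ of all ground atoms $R(\vec{c})$ with $R \in \sigma$ and every entry of $\vec{c}$ drawn from $\{a_1, \ldots, a_m\}$. Using the observation recalled above, I would verify that in the grounded Bayesian network defining $\mathbb{P}_{\mathfrak{G},\vec{n}}$, all parents of any node in $S$ are again in $S$; moreover, since $\chi_{R,i}$ is quantifier-free with variables in $\vec{x}$, the truth value of $\chi_{R,i}(\vec{c})$ depends only on atoms from $S$, and since the formulas have no extra free variables, the Type-I probabilities $\|\chi_{R,i}(\vec{c}, \vec{y})\|_{\vec{y}}$ reduce to the Boolean value of $\chi_{R,i}(\vec{c})$. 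Hence the conditional probability table of each node in $S$ is determined entirely by atoms in $S$ and is independent of $\vec{n}$.

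The core step is then a standard ancestral-marginalization argument for Bayesian networks: if $S$ is ancestrally closed in a Bayesian network, then the marginal distribution of the variables in $S$ equals the joint distribution of the sub-Bayesian network obtained by restricting to $S$. Applying this to $S$ and the event ``the substructure on $\{a_1, \ldots, a_m\}$ equals $\mathfrak{Y}$'', we obtain
\begin{equation*}
\mathbb{P}_{\mathfrak{G},\vec{n}}(A) \;=\; \prod_{R(\vec{c}) \in S} \mathrm{CPT}_{R(\vec{c})}\bigl(\mathfrak{Y}\bigr),
\end{equation*}
where the right-hand side is computed purely from the sub-network on $S$ and therefore does not depend on $\vec{n}$. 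Since $A$ was an arbitrary generating set and $\vec{n}$ an arbitrary sufficiently large sort vector, the sequence $(\mathbb{P}_{\mathfrak{G},\vec{n}}(A))$ is constant wherever defined, establishing projectivity. Projectivity implies the existence of an asymptotic limit by the remark preceding the proposition.

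The only nontrivial point is the verification that the sub-Bayesian network on $S$ is ancestrally closed and self-contained; once this is in place, the marginalization property of Bayesian networks gives the conclusion immediately. I expect no technical obstacles beyond carefully checking the multi-sorted bookkeeping so that the restriction to $\{a_1, \ldots, a_m\}$ respects sorts.
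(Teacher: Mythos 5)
Your proposal is correct and follows essentially the same route as the paper: both arguments rest on the observation at the end of Section \ref{defn} that in a quantifier-free FLBN every parent $P(\vec{b})$ of a ground node $R(\vec{a})$ has all entries of $\vec{b}$ occurring in $\vec{a}$, so the ground atoms over $\{a_1,\ldots,a_m\}$ form an ancestrally closed subnetwork whose DAG and conditional probability tables are independent of $\vec{n}$, whence the marginal probability of the generating set is constant. Your write-up merely makes the marginalization step and the reduction of $\left\Vert\chi_{R,i}\right\Vert_{\vec{y}}$ to a Boolean value slightly more explicit than the paper does.
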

\begin{proof} 
  Let $A\in\Omega_\infty$ be a generating set and let $\mathfrak{Y}$ be the corresponding structure on $a_1 , \ldots , a_m$ and let $(\mathbb{P}_{\vec{n}})$ be the sequence of probability distributions induced by  $\mathfrak{G}$.
  Then we need to show that for any ${\vec{n}}$ for which  $\mathbb{P}_{\vec{n}}(A)$ is defined this value is the same.
  So for any such ${\vec{n}}$ consider the Bayesian network $G_{\vec{n}}$ induced by $\mathfrak{G}$ on the domain ${\vec{n}}$.
  By Remark \ref{rem:qflbn}, all incoming edges into $R(\vec{a})$ come from a source node $R'(\vec{b})$ where all entries in $\vec{b}$ also occur in $\vec{a}$.
  Therefore, the calculation of the unconditional probability of the event $A$ does not involve any node with entries  outside of $a_1 , \ldots , a_m$.
  So let $G_{A,\vec{n}}$ be the complete Bayesian subnetwork on the nodes $R(\vec{a})$ for an $\vec{a} \in a_1 , \ldots , a_m$.
  By the discussion above, $G_{A,\vec{n}}$ suffices to calculate the probability of $A$.
  $G_{A,\vec{n}}$ has the same underlying DAG for any $\vec{n}$.
  As the conditional probability tables themselves are also identical for any $\vec{n}$, so is  $G_{A,\vec{n}}$  and thus the probability of $A$.   
\end{proof}
With these preparations out of the way, we can formulate our main result on the asymptotic behaviour of FLBN. 
\begin{theorem}\label{functionalLBN}
Let $\mathfrak{G}$ be an FLBN such that for all $n$-ary aggregation functions $f_R$, $f_R^{-1}\{0,1\} \subseteq {\{0,1\}}^n$. Then $\mathfrak{G}$ is asymptotically equivalent to a quantifier-free lifted Bayesian network $\mathfrak{G}'$. 
\end{theorem}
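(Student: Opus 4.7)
My plan is to proceed by induction on the topological order of the DAG $G$ of $\mathfrak{G}$, constructing $\mathfrak{G}'$ one node at a time by replacing the FLBN specification of a relation symbol $R$ with a quantifier-free one that is asymptotically equivalent at that node. At each stage the inductive hypothesis makes the part of $\mathfrak{G}'$ built so far a quantifier-free lifted Bayesian network, which by Proposition~\ref{qFLBN are projective} induces a projective family of distributions on the ancestor signature. The base case (root nodes) is immediate: the parent signature is empty, so each $\chi_{R,i}$ is equivalent to $\top$ or $\bot$, the relative frequency $\|\chi_{R,i}\|_{\vec{y}}$ is deterministically $0$ or $1$, and $f_R$ returns a single constant, which is already quantifier-free data.

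For the inductive step, the key technical claim is that, under the projective distribution governing the ancestors of $R$, the random variable $\|\chi_{R,i}(\vec{a},\vec{y})\|_{\vec{y}}$ converges in probability, as the domain sorts grow to infinity, to a value $p_i(\vec{a})$ depending only on the quantifier-free atomic type $\tau(\vec{a})$ of $\vec{a}$ in the ancestor signature. I would argue this in two steps: first, any first-order formula is asymptotically equivalent under a projective distribution to a quantifier-free one, because existential quantifiers over the large domain reduce to $0$-$1$ statements about the ancestor substructure on a bounded number of constants; second, for a quantifier-free formula, the probability that it holds for a \emph{generic} $\vec{y}$ (one disjoint from $\vec{a}$) depends only on $\tau(\vec{a})$, and a Chebyshev bound on the variance of the empirical frequency yields the concentration, while the non-generic $\vec{y}$ constitute a vanishing fraction of all tuples.

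Given this concentration, I define $\mathfrak{G}'$ at the node $R$ by taking as partition $(\chi_{R,\tau}(\vec{x}))$ the finitely many quantifier-free atomic $\vec{x}$-types in the parent signature, and setting the target probability on type $\tau$ to $f_R(p_1(\tau),\ldots,p_{n_R}(\tau))$. Continuity of $f_R$ lifts convergence in probability of the arguments to convergence in probability of the output; the hypothesis $f_R^{-1}\{0,1\}\subseteq\{0,1\}^{n_R}$ ensures that if some $p_i(\tau)\in(0,1)$ then the output $f_R(p_1(\tau),\ldots,p_{n_R}(\tau))$ lies in $(0,1)$, which is the crucial ingredient that prevents vanishing fluctuations in the relative frequencies from being amplified into non-vanishing errors when aggregated over the polynomially many ground tuples of $R$.

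The main obstacle I anticipate is the passage from marginal convergence at each fixed $\vec{a}$ to convergence of the joint distribution on a generating set. For a generating set specifying an $R$-substructure on tuples $\vec{a}_1,\ldots,\vec{a}_k$ together with an ancestor substructure on the same elements, I use the conditional independence of $R(\vec{a}_1),\ldots,R(\vec{a}_k)$ given the ancestor structure in the original FLBN to write the joint probability as the expectation of $\prod_{j=1}^{k} f_R\bigl(\|\chi_{R,i}(\vec{a}_j,\vec{y})\|_{\vec{y}}\bigr)$ over ancestor structures; a union bound across the $k$ tuples provides joint concentration of these relative frequencies, and dominated convergence combined with continuity of $f_R$ then yields convergence of the expectation to the corresponding product under $\mathfrak{G}'$, closing the induction.
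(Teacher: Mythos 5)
Your overall architecture --- induction up the DAG, replacing each $\chi_{R,i}$ by a quantifier-free formula, concentrating the relative frequencies on values determined by the atomic type of $\vec{a}$, and using continuity of $f_R$ together with the hypothesis $f_R^{-1}\{0,1\}\subseteq\{0,1\}^{n_R}$ to control degeneracies --- matches the paper's proof in outline. But there is a genuine gap at the step you label the ``key technical claim'', namely the assertion that ``any first-order formula is asymptotically equivalent under a projective distribution to a quantifier-free one''. First, this is not a consequence of projectivity alone; it is a special property of the distributions arising here. Second, and more seriously, your inductive hypothesis only gives you that the ancestor part of $\mathfrak{G}$ is asymptotically equivalent \emph{on generating sets} to a projective quantifier-free network. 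Convergence of finite-dimensional marginals says nothing about probabilities of quantified formulas: a relation whose ground atoms each have probability $c/n$ agrees with the identically false relation on every generating set, yet an existential quantifier over it has non-vanishing asymptotic probability. So you cannot transfer the quantifier-elimination analysis from the projective approximation to the actual distribution governing the ancestors of $R$, which is exactly what your frequency computation for $\lVert\chi_{R,i}(\vec{a},\vec{y})\rVert_{\vec{y}}$ requires when $\chi_{R,i}$ contains quantifiers.

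The paper closes this gap by running a \emph{parallel} induction: alongside asymptotic equivalence to a quantifier-free network it maintains the invariant that the actual distributions $\mathbb{P}_{\vec{n},T_h}$ almost surely satisfy all extension axioms in the language of those relations with asymptotic probability in $(0,1)$ (its Lemma 1, which uses the conditional independence of the groundings of each relation given its parents to count witnesses), and then invokes quantifier elimination for the first-order theory of the extension axioms to replace each $\chi_{R,i}$ by a quantifier-free $\chi'_{R,i}$ almost everywhere \emph{under the actual distribution}. This is also where the hypothesis on $f_R$ really earns its keep: it guarantees that a degenerate output probability can only arise from input frequencies that are exactly $0$ or $1$ (i.e.\ from $\chi'_{R,i}$ being implied by or contradictory to the atomic type), so that $R$ is either genuinely non-degenerate --- in which case the extension axioms extend to include it --- or exactly definable from its parents; without this, $R$ could have probability $o(1)$ but not $0$, breaking the quantifier elimination at the next level. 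Your union-bound and dominated-convergence treatment of generating sets is fine, but to make the proof go through you must strengthen your inductive hypothesis to carry the extension-axiom invariant (or an equivalent almost-sure theory with quantifier elimination) for the true distributions, not merely projectivity of the approximating network.
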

The condition that $f_R^{-1}\{0,1\} \subseteq {\{0,1\}}^n$ signifies that as long as among all of the parent formulas, there are both examples and counterexamples,
the relation at the child node is not deterministic.
This is important for asymptotic analysis since the behaviour of relations that occur with a very small but positive probability is radically different from that of relations whose probability is precisely 0.
For the former, as long as the domain size is big enough, there will almost always be at least a few examples of this relation present among domain individuals;
in the latter case, the relation will not occur, no matter how large the domain. 

We briefly turn to Koponen's analysis of LBN-CPL~\cite{Koponen20}. Koponen introduces the notions of a \emph{critical} number and a \emph{critical} formula. We will not rehearse their involved technical definitions here; it suffices for us to know that the definitions depend (only) on $\mathfrak{G}$, that every first-order formula is non-critical and that for all conditional probability formulas $\varphi, \psi, \theta, \tau$ and tuples of variables $\vec{y}$ for all but finitely many r the formulas $r+\left\Vert \varphi\mid\psi\right\Vert _{\vec{y}}\geq\left\Vert \theta\mid\tau\right\Vert _{\vec{y}}$ and 
$\left\Vert \varphi\mid\psi\right\Vert _{\vec{y}}\geq\left\Vert \theta\mid\tau\right\Vert _{\vec{y}}+r$ are non-critical.

 Note that the asymptotic characterisation of FLBN does not depend on non-criticality or any similar assumption in its statement. 
The main results of \citeA{Koponen20} are then (Theorems 3.14 and 3.16 respectively):
\begin{theorem}\label{AQE_CPL}
Over any given  LBN-CPL $\mathfrak{G}$, every non-critical conditional probability formula $\varphi$ is asymptotically equivalent to a quantifier-free formula $\psi$. 
\end{theorem}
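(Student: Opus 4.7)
The plan is to proceed by induction on the structure of the CPL formula $\varphi$, showing that every non-critical $\varphi$ has a quantifier-free asymptotic equivalent. The atomic and Boolean cases are immediate (atoms are already quantifier-free, and asymptotic equivalence is preserved under finite Boolean combinations), so the substantive work lies in the first-order quantifiers and in the probability comparison constructors.

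For a first-order existential $\exists y \, \psi(\vec{x}, y)$, apply the inductive hypothesis to obtain a quantifier-free $\psi'$ asymptotically equivalent to $\psi$. The key intermediate fact to establish is that under $\mathfrak{G}$ a quantifier-free formula like $\psi'(\vec{x}, y)$ has a well-defined limit probability $q$; this can be obtained by relating the relevant sub-DAG of $\mathfrak{G}$ to a quantifier-free lifted Bayesian network and invoking Proposition \ref{qFLBN are projective}. A standard 0-1 style argument then shows that if $q > 0$ the number of witnesses $y$ grows linearly in the domain size and the existential is asymptotically almost surely true, whereas if $q = 0$ it is asymptotically almost surely false. Either way the existential is equivalent to the quantifier-free constant $\mathrm{true}$ or $\mathrm{false}$, and the universal case is dual.

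The main obstacle is the probability comparison $r + \left\Vert \varphi \mid \psi \right\Vert_{\vec{y}} \geq \left\Vert \theta \mid \tau \right\Vert_{\vec{y}}$. Inductively replacing $\varphi, \psi, \theta, \tau$ by quantifier-free $\varphi', \psi', \theta', \tau'$, the empirical ratios $\lvert \varphi' \wedge \psi' \rvert_{\vec{y},\iota}/\lvert \psi' \rvert_{\vec{y},\iota}$ and $\lvert \theta' \wedge \tau' \rvert_{\vec{y},\iota}/\lvert \tau' \rvert_{\vec{y},\iota}$ converge in probability to deterministic limits $p_1, p_2$ by a Chernoff/Hoeffding estimate applied layer by layer along the DAG of $\mathfrak{G}$, where each layer's ground atoms are conditionally independent given their parents. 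Non-criticality is, by definition, the assertion $r + p_1 \neq p_2$, so the comparison is resolved in the limit with probability $1$ and the probability formula is asymptotically equivalent to $\mathrm{true}$ or $\mathrm{false}$. The hardest parts will be (i) making the concentration uniform in the interpretation $\iota$ when $\vec{y}$ is a long tuple, which requires a union bound against a sufficiently sharp concentration rate, and (ii) the boundary case where $\lvert \psi' \rvert_{\vec{y},\iota} = 0$, which has asymptotically negligible probability when the limit of $\psi'$ is positive and otherwise reduces the whole comparison to a simpler always-false formula via the CPL semantics.
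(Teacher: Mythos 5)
First, note that the paper does not actually prove Theorem \ref{AQE_CPL}: the statement is quoted from Koponen (Theorem 3.14 there) and the text explicitly defers the proof to that reference (``as the proof is rather involved''). So there is no in-paper argument to match your plan against; the closest in-paper analogue is the proof of Theorem \ref{functionalLBN}, which runs on the same fuel (almost-sure convergence of relative frequencies plus extension axioms and quantifier elimination for their theory). Your overall shape --- structural induction, with first-order quantifiers and probability comparisons resolved by a concentration argument gated by non-criticality --- is the right one and is consistent with that machinery.

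There is, however, a genuine error in the plan as written: you claim that $\exists_y\,\psi(\vec{x},y)$ and the probability comparisons reduce to the quantifier-free \emph{constants} $\mathrm{true}$ or $\mathrm{false}$. That is only correct for sentences. When free variables remain, the limiting witness density $q$ of $\psi'(\vec{x},y)$ and the limits $p_1,p_2$ of the empirical ratios all depend on the complete atomic diagram of the free-variable tuple (e.g.\ $\left\Vert \mathrm{Infected}(x)\mid\mathrm{Pupil}(x,s)\right\Vert_x$ has a limit that varies with the diagram of $s$), because the ground atoms indexed by distinct witnesses $y$ are only \emph{conditionally} independent given the atoms mentioning $\vec{x}$ alone. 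The correct output of each such inductive step is therefore a quantifier-free formula \emph{in the free variables} --- the disjunction of those complete atomic diagrams for which the conditional limit makes the existential witnessed, respectively the inequality strictly true --- and non-criticality must be invoked per diagram, which is exactly why only finitely many values of $r$ are excluded. This is also why the paper's own proof of the FLBN analogue conditions on complete atomic diagrams (Lemma 2) and routes the quantifier case through the extension axioms and their quantifier elimination rather than a bare 0--1 law. Your Chernoff-by-layers concentration step and your handling of the $\lvert\psi'\rvert_{\vec{y},\iota}=0$ boundary are fine in spirit, but the induction as stated would output the wrong equivalent for any subformula with free variables; once each case is repaired to produce a diagram-indexed quantifier-free formula, the argument goes through.
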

As the proof is rather involved, we refer readers to \citeA{Koponen20}.

On the level of  networks, Koponen obtains a similar asymptotic convergence result:
\begin{theorem}\label{AQE_LBN}
Any  LBN-CPL $\mathfrak{G}$ all of whose partition formulas $\chi_{R,i}$ are non-critical is asymptotically equivalent to a quantifier-free lifted Bayesian network.
\end{theorem}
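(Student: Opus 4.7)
The plan is to apply Theorem \ref{AQE_CPL} node by node to replace every partition formula by an asymptotically equivalent quantifier-free formula, repair the resulting family into a genuine logical partition, and then show by topological induction along the DAG that the modified LBN-CPL inherits the same asymptotic limit as $\mathfrak{G}$. Concretely, Theorem \ref{AQE_CPL} applied to each non-critical $\chi_{R,i}$ yields quantifier-free $\mathrm{par}(R)$-formulas $\psi_{R,i}$ asymptotically equivalent to $\chi_{R,i}$ over $\mathfrak{G}$. Since asymptotic equivalence does not preserve the partition property, I would set $\psi'_{R,1}:=\psi_{R,1}$, $\psi'_{R,i}:=\psi_{R,i}\wedge\neg\bigvee_{j<i}\psi_{R,j}$ for $2\leq i\leq\nu_R$, and add a catch-all class $\psi'_{R,\nu_R+1}:=\neg\bigvee_{i\leq\nu_R}\psi_{R,i}$ with some fixed probability (say $0$). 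The $\{\psi'_{R,i}\}$ form a logical partition by construction, and whenever the $\psi_{R,i}$ agree with the $\chi_{R,i}$ on all tuples they coincide classwise with $\{\chi_{R,i}\}$ and the catch-all class is empty. Let $\mathfrak{G}'$ be the quantifier-free LBN-CPL equipped with these partitions and the original probabilities $\mu(R\mid\chi_{R,i})$.

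For the main argument I would fix a generating set $A$, choose a topological ordering $R_1,\ldots,R_N$, and introduce hybrids $\mathfrak{G}_k$ in which $R_1,\ldots,R_k$ use the $\psi'$-partitions while $R_{k+1},\ldots,R_N$ retain the $\chi$-partitions, so $\mathfrak{G}_0=\mathfrak{G}$ and $\mathfrak{G}_N=\mathfrak{G}'$. For each step, $\mathfrak{G}_{k-1}$ and $\mathfrak{G}_k$ differ only in the CPTs at $R_k(\vec{a})$, and on the event $E_k$ that the $\chi$- and $\psi'$-partitions of $R_k$ coincide on every tuple (an event measurable with respect to $\mathrm{par}(R_k)$ alone) these CPTs are literally equal. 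A coupling with shared randomness at $R_k$ then produces identical outcomes on $E_k$, yielding
\begin{align*}
|\mathbb{P}_{\mathfrak{G}_{k-1},\vec{n}}(A)-\mathbb{P}_{\mathfrak{G}_k,\vec{n}}(A)|\leq\mathbb{P}_{\mathfrak{G}_{k-1},\vec{n}}(E_k^c)+\mathbb{P}_{\mathfrak{G}_k,\vec{n}}(E_k^c),
\end{align*}
and chaining these bounds gives asymptotic equivalence of $\mathfrak{G}$ and $\mathfrak{G}'$. The existence of the asymptotic limit of $\mathfrak{G}'$ itself follows from Proposition \ref{qFLBN are projective}, since a quantifier-free LBN-CPL is an instance of a quantifier-free FLBN in the sense of Section \ref{defn}.

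The principal obstacle is controlling $\mathbb{P}_{\mathfrak{G}_{k-1},\vec{n}}(E_k^c)$: Theorem \ref{AQE_CPL} directly supplies $\mathbb{P}_{\mathfrak{G},\vec{n}}(E_k^c)\to 0$, but $E_k^c$ is not a generating set, so its probability under the hybrid $\mathfrak{G}_{k-1}$ is not immediately handed over by the inductive hypothesis, which only asserts equality of asymptotic limits on generating sets. To close this gap I would exploit that $E_k^c$ is itself a CPL-formula over $\mathrm{par}(R_k)$, so a second application of Theorem \ref{AQE_CPL} (valid provided the resulting formulas remain non-critical, a condition inherited from the hypothesis on $\mathfrak{G}$) approximates it by a quantifier-free formula whose probability is a sum over finitely many tuples of products of atomic probabilities. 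This reduces controlling $\mathbb{P}_{\mathfrak{G}_{k-1},\vec{n}}(E_k^c)$ to matching the marginal distribution of individual ancestor atoms under $\mathfrak{G}_{k-1}$ with that under $\mathfrak{G}$ in the limit, which is exactly the conclusion of the preceding induction steps applied to singleton generating sets. The careful bookkeeping needed to make this reduction rigorous, and to ensure the non-criticality assumption carries through every modification, is the technical heart of the proof.
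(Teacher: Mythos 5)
Your overall strategy (replace each $\chi_{R,i}$ by a quantifier-free equivalent via Theorem \ref{AQE_CPL}, repair the partition, and compare the two networks level by level) is a legitimate route, but it is not the paper's: the paper first applies Proposition \ref{distrosem} to rewrite $\mathfrak{G}$ so that every non-root relation is a deterministic, CPL-definable function of independently distributed root facts, and then invokes Theorem \ref{AQE_CPL} once, over the single product measure on the roots. That reduction sidesteps exactly the difficulty you run into, because there is only one underlying measure with respect to which ``asymptotically equivalent'' ever has to be interpreted.

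The gap in your version is the step you yourself flag as the ``technical heart'', and the repair you sketch does not close it. First, $E_k^c$ is a sentence (an existential statement over all sort-appropriate tuples), so its quantifier-free asymptotic equivalent over $\mathfrak{G}$ is simply $\bot$; this merely restates $\mathbb{P}_{\mathfrak{G},\vec n}(E_k^c)\to 0$ and gives no purchase on $\mathbb{P}_{\mathfrak{G}_{k-1},\vec n}(E_k^c)$. Second, the equivalence supplied by Theorem \ref{AQE_CPL} is an almost-sure statement with respect to $\mathbb{P}_{\mathfrak{G},\vec n}$; transferring it to the hybrid would require the two measures to agree asymptotically on a \emph{global} event depending on all $\Theta(n^r)$ tuples, whereas your induction hypothesis only yields agreement on generating sets, i.e.\ on events determined by finitely many ground atoms. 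Matching marginals of individual ancestor atoms is far too weak here: controlling ``some tuple violates the equivalence'' needs uniformity over $n^r$ tuples, not convergence of finitely many marginals. The good news is that the coupling you already introduced fixes this once you make it global rather than pairwise: couple $\mathfrak{G}$ and $\mathfrak{G}'$ with shared uniforms per ground atom, let $C_k$ be the event that the interpretations of $R_1,\dots,R_k$ coincide exactly, and observe that $C_{k-1}\cap E_k\subseteq C_k$ while on $C_{k-1}$ the event $E_k$ is determined by the $\mathfrak{G}$-side structure alone, so that $\mathbb{P}(C_{k-1}\cap E_k^c)\le \mathbb{P}_{\mathfrak{G},\vec n}(E_k^c)\to 0$ follows directly from Theorem \ref{AQE_CPL}. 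You should also verify that non-criticality survives your partition repair, and note that existence of the limit for the target network comes from projectivity via Proposition \ref{qFLBN are projective}.
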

While \citeA{Koponen20} gives a stand-alone proof, we can derive it straightforwardly from Theorem \ref{AQE_CPL} and Proposition \ref{distrosem}:
\begin{proof}
Let $\mathfrak{G}$ be a lifted Bayesian network in the form of Proposition \ref{distrosem}. Then every $\chi_{R,i}$ is asymptotically equivalent to a quantifier-free first-order formula over the distribution defined by the probabilistic facts, which give the required quantifier-free representation. 
\end{proof} 
\subsection{Proof of Theorem \ref{functionalLBN}}

For the proof we will use the classical model-theoretic notion of an \emph{extension axiom}~\cite{KolaitisV92}.
The extensions axioms formalise the notion that any possible configuration of relations within a structure do occur. 

\begin{definition}[\citeA{KolaitisV92}]

  A \emph{complete atomic diagram in $n$ variables for a signature $\sigma$} is a quantifier-free formula $\varphi(x_1, \dots x_n)$ such that for every atom $R(\vec{x})$ of $\sigma$ with $\vec{x} \subset \{ x_1, \dots x_n\}$ either $\varphi(x_1, \dots x_n) \models R(\vec{x})$ or $\varphi(x_1, \dots x_n) \models \neg R(\vec{x})$.

An \emph{r+1-extension-axiom} for a language $L$ is a sentence 
\[
\forall_{x_{1},\ldots,x_{r}}\left(\underset{1\leq i<j\leq r}{\bigwedge}x_{i}\neq x_{j}\rightarrow\exists_{x_{r+1}}\varphi_{\Phi}(x_1, \dots, x_{r+1})\right)
\]
where $r\in\mathbb{N}$,
\[ \varphi_\Phi :=
\left(\underset{1\leq i\leq r}{\bigwedge}x_{i}\neq x_{r+1}\wedge\underset{\varphi\in\Phi}{\bigwedge}\varphi\wedge\underset{\varphi\in\Delta_{r+1}\backslash\Phi}{\bigwedge}\neg\varphi\right)
\]
and $\Phi$ is a subset of
\[
\Delta_{r+1} := \left\{ R(\vec{x})\mid R\in\mathcal{R},\textrm{ \ensuremath{x_{r+1} \in \vec{x}} a tuple from \ensuremath{\{x_{1},\ldots,x_{r+1}\}}} \right\} .
\]
\end{definition}

\begin{fact}[\protect{\cite[Theorem 3.13]{KolaitisV92}}]
    As a first-order theory, the set of extension axioms has quantifier elimination.
\end{fact}
In the following statements and proofs, we say that a sentence or other statement about finite structures $\varphi$ is true 
\emph{$(\mathbb{P}_{\vec{n}})$-almost-everywhere} for a family of distributions $(\mathbb{P}_{\vec{n}})$ if for any sequence $D_k$ of domains which tend uniformly to infinity, $\underset{n \rightarrow \infty}{\lim}\mathbb{P}_{\vec{n}}(\varphi) = 1$ 
\begin{lemma}\label{FirstLemma} Let the family of distributions $(\mathbb{P}_{\vec{n}})$ of $\{R_1, \dots, R_l\}$ be asymptotically equivalent to the reduct of a distribution on $\{R_1, \dots, R_l, P_1, \dots, P_m\}$ where $P_1, \dots, P_m$ are independently distributed with probabilities $p_1, \dots, p_m \in (0,1)$ and $R_1, \dots, R_l$ are defined to be Boolean combinations of $P_1, \dots, P_m$ that are neither contradictory nor tautologies.
Furthermore, assume that the following hold for $(\mathbb{P}_{\vec{n}})$:
\begin{enumerate}
\item For all $0 \leq q < l$ and all different tuples $\vec{a}$ and $\vec{b}$, $R_{q+1}(\vec{a})$ is conditionally $\mathbb{P}_{\vec{n}}$-independent of $R_{q+1}(\vec{b})$ given the interpretation of $R_1, \dots R_q$.

\item The probabilities $\mathbb{P}_{\vec{n}}(R_{i}(\vec{a}))$ do not depend on $\vec{a}$. 
\end{enumerate}

Let $\chi$ be an arbitrary extension axiom. Then  $\chi$ holds $\mathbb{P}_{\vec{n}}$-almost-everywhere.
\end{lemma}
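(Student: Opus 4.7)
The plan is to show that for each extension axiom $\alpha$ over $\{R_1,\ldots,R_l\}$, the probability $\mathbb{P}_{\vec{n}}(\alpha)$ tends to $1$ as the sort sizes diverge. Fix the complete atomic diagram $\varphi_\Phi(x_1,\ldots,x_{r+1})$ underlying $\alpha$; I must show that with probability approaching $1$, every distinct $a_1,\ldots,a_r$ can be extended by some distinct $a_{r+1}$ satisfying $\varphi_\Phi$. The two ingredients will be (i) a uniform positive lower bound $q > 0$ on the probability that a single fresh candidate works, and (ii) approximate independence across distinct candidates, after which a concentration and union-bound argument finishes the job.

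For (i), I would fix distinct $a_1,\ldots,a_r$ and a fresh candidate $b$. Using homogeneity (condition 2) together with condition 1 applied level by level, the joint probability that the atoms $R_i(\vec{a})$ appearing in $\varphi_\Phi(a_1,\ldots,a_r,b)$ take the values prescribed by $\Phi$ depends only on $\Phi$; call this value $q$. To see $q > 0$ I would pass via the hypothesised asymptotic equivalence to the extended distribution on $\{R_1,\ldots,R_l,P_1,\ldots,P_m\}$: there, each $R_i$-atom involving $b$ is a Boolean combination of $P_j$-atoms that also involve $b$, and these $P_j$-atoms are jointly independent of everything involving only $\{a_1,\ldots,a_r\}$; since each Boolean combination is neither a contradiction nor a tautology, every compatible pattern of $R_i(\vec{a})$-values occurs with strictly positive probability, so transferring back yields $q>0$ for all sufficiently large $\vec{n}$.

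For (ii), I would iterate condition 1 across the levels $i = 1, \ldots, l$: for distinct fresh candidates $b \neq b'$, the atoms $R_i(\vec{a})$ with $b\in\vec{a}$ and the atoms $R_i(\vec{a}')$ with $b'\in\vec{a}'$ live at disjoint tuples, so conditional on $R_1,\ldots,R_{i-1}$ they are pairwise --- and, by successively splitting groups of tuples, jointly --- independent. This upgrades to joint independence of $\varphi_\Phi(\ldots,b)$ and $\varphi_\Phi(\ldots,b')$ given the $R$-interpretation on $\{a_1,\ldots,a_r\}$. Consequently the probability that no fresh candidate works is at most $(1-q)^{n-r}$, where $n$ is the smallest sort size, and a union bound over the $O(n^r)$ choices of $(a_1,\ldots,a_r)$ yields failure probability at most $n^r(1-q)^{n-r}\to 0$.

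The main obstacle, I anticipate, will be the careful bookkeeping needed to promote the pairwise conditional independence supplied by condition 1 to the full joint independence required both for the constancy of $q$ and for the concentration step --- in particular, verifying mutual (not just pairwise) independence across the family of events $\{\varphi_\Phi(\ldots,b)\}_b$. A secondary subtlety is ensuring positivity of $q$ in the presence of any logical relations that the Boolean definitions may impose between different $R_i$'s, so that the statement should effectively be read as ranging over those $\Phi$ compatible with such relations, which is automatic in the intended application of the lemma.
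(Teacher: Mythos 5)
Your overall shape (lower-bound the success probability of a fresh witness, use independence across witnesses, then a union bound giving $n^r(1-\delta)^{n-r}\to 0$) matches the paper's, and your use of the independent $P_j$-representation to get the positive lower bound $\delta$ is exactly how the paper justifies $\mathbb{P}_{\vec n}(\varphi_\Phi(\vec a,y))>\delta$. But there is a genuine gap in your step (ii), and it is not mere bookkeeping. Condition 1 gives conditional independence of $R_{q+1}$-atoms at disjoint tuples \emph{given the full interpretation of $R_1,\dots,R_q$ on the whole domain}. In an FLBN the conditional probability of a level-$(q+1)$ atom is a function of global relative frequencies of the lower levels, so it is a random variable depending on the entire lower-level structure, which is shared between your candidates $b$ and $b'$. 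Hence the multi-level events $\varphi_\Phi(\dots,b)$ and $\varphi_\Phi(\dots,b')$ are in general \emph{not} jointly (nor even pairwise) independent given only the $R$-interpretation on $\{a_1,\dots,a_r\}$: integrating out the shared lower levels induces correlation. For the same reason your ``value $q$ depending only on $\Phi$'' is not well-defined across levels; only the marginal probabilities of single atoms are pinned down by condition 2. Your direct, one-shot argument therefore does not go through as written.

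The paper circumvents exactly this by inducting on the number of relations $l$ (ordered compatibly with the DAG). In the induction step it first invokes the induction hypothesis to conclude that, almost everywhere, every $\vec a$ has at least $k$ witnesses of the \emph{lower-level} diagram $\varphi_{\Phi\cap\{R_1,\dots,R_l\}}(\vec a,y)$, for arbitrary $k$. Then, conditioning on the full lower-level interpretation, only the top-level atoms at these $k$ witnesses remain, and for those condition 1 does give genuine conditional independence with success probability bounded below by $\delta$ (via asymptotic equivalence), yielding a failure bound of the form $k^r(1-\delta)^{k-r}$, which is killed by letting $k\to\infty$ after $n\to\infty$. To repair your proof you would need essentially this same peeling-by-levels: either adopt the induction on $l$, or condition on the global lower-level interpretation lying in a high-probability event where the relevant relative frequencies are near their limits, and only then run your independence and counting argument at the top level.
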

\paragraph{Proof of Lemma \ref{FirstLemma}}
For the sake of simplifying notation, we will only consider the single-sorted case. For the multi-sorted case, the calculation is analogous. 
By induction on $l$.

$l = 1$: 
Choose an arbitrary $r+1$ extension axiom $\chi$ corresponding to a set $\Phi$. Let $\vec{a}$ be arbitrarily chosen. Then by asymptotic equivalence, for sufficiently large $\vec{n}$, $\mathbb{P}_n(\varphi_{\Phi}(\vec{a},y))>\delta$ for a $\delta >0$.
By conditions 1 and 2, we can conclude that
\[
(\mathbb{P}_n)(\neg \chi) \leq n^r{(1-\delta)}^{n-r}
\]
which limits to $0$ as $n$ approaches $\infty$.

$l \rightarrow l+1$:
Choose an arbitrary $r+1$ extension axiom $\chi$ corresponding to a set $\Phi$. By the induction hypothesis, all extension axioms hold for $R_1, \dots, R_l$. Therefore, for any natural number $k$, there are $(\mathbb{P}_n)$-almost-everywhere more than $k$ witnesses of $\varphi_{\Phi \cap \{R_1, \dots, R_l\}}(\vec{a},y)$ for every $\vec{a}$, for arbitrary $k$. Just as in the base case above, we can now apply asymptotic equivalence and  conditions 1 and 2 to conclude that conditioned on there being at least $k$ witnesses $\varphi_{\Phi \cap \{R_1, \dots, R_l\}}(\vec{a},y)$,
\[
(\mathbb{P}_n)(\neg \chi) \leq k^r{(1-\delta)}^{k-r}
\]
which limits to $0$ as $k$ approaches $\infty$.  $\square$

The next lemma is a version of the strong law of large numbers adapted to our setting. 

\begin{lemma}\label{SecondLemma} Let $\phi$ be a complete atomic diagram in $r$ variables. For every $p \in (0,1)$ consider the subset $\Omega_{\phi,p}$ of $\Omega_\infty$ defined by the event that the relative frequencies of $\phi$ in the initial segments of $\omega \in \Omega_\infty$ of lengths $\vec{n}n$ converge to $p$. Let $\mathbb{P}_\infty$ be the asymptotic limit of an independent distribution.  Then there is a  $p_\phi \in (0,1)$ such that  $\mathbb{P}_\infty(\Omega_{\phi,p}) = 1$. This also holds when conditioning on a complete atomic diagram in $m < r$ variables. 
\end{lemma}

\paragraph{Proof of Lemma \ref{SecondLemma}}
Induction on $r$.

In the case $r = 1$,  $\phi(x)$ is asymptotically equivalent to a sequence of independent random variables, and the strong law of large numbers gives the result.

So assume true for $r$. Then $\phi(\vec{x},y)$ is given by $\phi_r(\vec{x}) \wedge \phi'(\vec{x},y)$, where $\phi'(\vec{x},y)$ is a sequence of independent random variables and $\phi_r(\vec{x})$ is a complete atomic diagram in $r$ variables.
We can conclude with the strong law of large numbers again. $\square$

\paragraph{Proof of Theorem \ref{functionalLBN}}
We now proceed with the proof of Theorem \ref{functionalLBN} itself.
The overall strategy is to proceed inductively, using Lemma \ref{FirstLemma} to show that the extension axioms are valid almost everywhere,
and then to apply quantifier elimination of the resulting first-order theory.
We can then conclude with Lemma \ref{SecondLemma} to show that the relative frequencies of quantifier-free formulas are convergent,
and continue the induction. 

Recall that the \emph{height} of a directed graph is the length of its longest directed path. 
We perform a parallel induction by height on the following statements:

The family of distributions $(\mathbb{P}_{\vec{n},T})$ is asymptotically equivalent to a quantifier-free LBN, in which all aggregation formulas $\varphi_{R,i}$ for which neither $ \forall_{\vec{x}}(\varphi_{R,i}(\vec{x}) \rightarrow R(\vec{x}))$ nor $\forall_{\vec{x}} \neg (\varphi_{R,i}(\vec{x}) \wedge R(\vec{x}))$ are true $(\mathbb{P}_{\vec{n},T})$-almost-everywhere, are annotated with a probability in $(0,1)$. 

Every extension axiom for the language with those relations that have asymptotic probability $p \in (0,1)$  is valid $(\mathbb{P}_{\vec{n},T})$-almost-everywhere 

Let $T_h$ be the fragment of the FLBN of height not exceeding $h$. 

Base step:
An FLBN of height 0 is an independent distribution, showing the first statement. The extension axioms are well-known to hold almost everywhere with respect to such a family of distributions~\cite{KolaitisV92,KeislerL09}.

Induction step:
Since the extension axioms form a complete theory with quantifier elimination, every $\chi_{R,i}$ is $(\mathbb{P}_{\vec{n},T_h})$-almost-everywhere equivalent to a quantifier-free $\chi'_{R,i}$. Since we can replace those relations that are almost everywhere true for all or no elements by $\top$ and $\bot$ respectively, we can assume no such relations to occur in $\chi'_{R,i}$.

Let $\varphi_1, \dots, \varphi_m$ list the complete atomic diagrams in the variables $x_1, \dots x_r$ for the signature of $T_h$, where $r$ is the arity of $R$. Then by Lemma 2 above, for every $\varphi_j$, the relative frequency of $\chi'_{R,i}$ given $\varphi_j$ is almost surely convergent to  a number $p_{\chi'_{R,i},\varphi_j}$, which lies in $(0,1)$ if and only if $\chi'_{R,i}$ is neither logically implied by nor contradictory to $\varphi_j$.

Therefore, the family of distributions $(\mathbb{P}_{\vec{n},T})$ is asymptotically equivalent to a quantifier-free LBN, which is obtained by listing $\varphi_1, \dots, \varphi_m$ and annotating them with the probability $f_R(p_{\chi'_{R,1},\varphi_j}, \dots, p_{\chi'_{R,n},\varphi_j})$.

It remains to show that the additional condition on the annotated probabilities is satisfied.

So assume that $f_R(p_{\chi'_{R,1},\varphi_j}, \dots, p_{\chi'_{R,n},\varphi_j}) \in \{0,1\}$.

By the assumption on $f_R$, this implies $p_{\chi'_{R,i},\varphi_j} \in \{0,1\}$ for every $i$.
Since each $\chi_{R,i}$ is $(\mathbb{P}_{\vec{n},T})$-almost-everywhere equivalent to $\chi'_{R,i}$, the relative frequency of $\chi_{R,i}$ given $\varphi_j$ is exactly $p_{\chi'_{R,i},\varphi_j}$ almost everywhere. So almost everywhere the conditional probability of $R(\vec{x})$ given $\varphi_j(\vec{x})$ is 0 or 1. In other words, $ \forall_{\vec{x}}(\varphi_j(\vec{x}) \rightarrow R(\vec{x}))$ or $\forall_{\vec{x}} \neg (\varphi_j(\vec{x}) \wedge R(\vec{x}))$ are true $(\mathbb{P}_{\vec{n},T})$-almost-everywhere.

Finally we show that the extension axioms are valid almost surely. We will verify the assumptions of Lemma \ref{FirstLemma}. By asymptotic equivalence to a quantifier-free LBN and Proposition 1, we can find an independent distribution as required by Lemma \ref{FirstLemma}. The two additional assumptions are clearly satisfied for any FLBN by the Markov condition for the underlying Bayesian network. $\square$

\subsection{Examples}\label{subsec:examples}
We illustrate the analysis of the last subsection with a sequence of simple examples that serve to highlight the main aspects, and conclude with two counterexamples that demonstrate the necessity of two of the restrictions imposed on our setting.
Consider the situation of Example \ref{example:RLR}: The signature $\sigma$ has two unary relation symbols  $Q$ and $R$, and the underlying DAG $G$ is $Q \longrightarrow R$. We  model a relationship between $R(x)$ and those $y \in D$ that satisfy $Q(y)$.
In  Example \ref{example:RLR}, we have seen an RLR approach to this problem. Here, the asymptotic behaviour is well-known: as domain size increases, the expected number of $a \in D$ that satisfy $Q(a)$ also does. By the law of large numbers, this increase is almost surely linear with domain size. Therefore the probability of $R(y)$ will limit to 0 if and only if $w < 0$, and limit to 1 if and only if $w > 0$.
A similar analysis holds if we consider a noisy-or combination, or the model of a probabilistic logic program.  
\begin{example}\label{example:noisy-or}
Assume that for every $y \in D$ for which $Q(y)$ holds there is an independent chance $p_1 \in [0,1]$ of $R(x)$, and that for every $y \in D$ for which $Q(y)$ does not hold there is an independent chance $p_1 \in [0,1]$ of $R(x)$. Further assume that $\mu(Q) \in (0,1)$. Then as domain size increases, there will almost surely be at least one $y \in D$ that causes $R(x)$, leading to $R(x)$ being true almost surely --- unless two of $p_1$, $p_2$ and $\mu(Q)$ are 0. So outside of these boundary cases, $p_1$, $p_2$ and $\mu(Q)$  have no influence on the asymptotic probability of $R(x)$ (which is always 1).
In the representation of the distribution semantics, there would be additional binary predicates $P_1(x,y)$ and $P_2(x,y)$ with $\mu(P_1) := p_1$ and  $\mu(P_2) := p_2$. the definition of $R(x)$ would be$\exists_y (Q(y) \wedge P_1(x,y)) \vee \exists_y (\neg Q(y) \wedge P_2(x,y))$.  
Asymptotically, when neither of the root probabilities are zero, both existential clauses in that definition will evaluate to ``true''. 
\end{example}

If we assume the dependency to be discrete with a known cut-off point $r$ in the relative frequency of $R$, we could consider using LBN-CPL to model it:

\begin{example}\label{example:LBN-CPL}
 Let $\mathfrak{G}$ be an LBN-CPL on $G$ with a probability $\mu(Q) \in [0,1]$ for $Q$ and two conditional probability formulas characterising $R$: $\chi_{R,1} := \left\Vert Q(y)\right\Vert_y \geq r$ and  $\chi_{R,2} :=\neg (\left\Vert Q(y)\right\Vert_y \geq r)$, where $r \in [0,1]$. We furthermore choose $\mu(R \mid \chi_{R,1}) \in [0,1]$ and $\mu(R \mid \chi_{R,i}) \in [0,1]$.  It turns out that the formulas $\chi_{R,i}$ are non-critical only if $r \neq \mu(Q)$. The asymptotic analysis here proceeds as follows: By the law of large numbers,  $\left\Vert Q(y)\right\Vert_y$ will be almost surely arbitrarily close to  $\mu(Q)$ as domain size increases, and therefore almost surely $ \chi_{R,1}$ will be true if and only $r <  \mu(Q)$. Thus the asymptotically equivalent quantifier-free Bayesian network will simply have the quantifier-free formula ``true'' as $\chi_{R}$ and $\mu(R)$ will be  $\mu(R \mid \chi_{R,1})$ if  $r <  \mu(Q)$ and  $\mu(R \mid \chi_{R,2})$ if  $r >  \mu(Q)$. Note that we cannot make any statement about the critical case $r = \mu(Q)$. 
Alternatively, consider the representation in the form of Proposition \ref{distrosem}, which is as follows: The signature $\sigma '$ has two additional unary predicates $P_1$ and $P_2$, with $\mu(P_i) := \mu(R \mid \chi_{R,i})$. The definition of $R(x)$ is then given by $R(x) := (\chi_{R,1}(x) \wedge  P_1(x)) \vee (\chi_{R,2}(x) \wedge  P_2(x))$.
Asymptotically, $\chi_{R,1}(x)$ and $\chi_{R,1}(x)$ behave just as in the original representation, so the asymptotic representation will be $R(x) :=   P_1(x)$ if  $r <  \mu(Q)$ and $R(x) :=   P_2(x)$ if  $r >  \mu(Q)$.
\end{example}
Now consider modelling such a dependency with an FLBN.\@ 
\begin{example}\label{example:FLBN}
  Let $\mathfrak{G}$ be an FLBN on $G$ with a formula  $Q(y)$ and a function $f_R : [0,1] \rightarrow [0,1]$. Assume further that $Q$ is annotated with a probability $\mu(Q) \in [0,1]$. Then by the law of large numbers, $\left\Vert Q(y)\right\Vert_y$ converges to  $\mu(Q)$ almost surely as domain size increases. Since $f$ is continuous, this implies that $f(\left\Vert Q(y)\right\Vert_y)$  converges to $f(\mu(Q))$ almost surely. So the asymptotically equivalent quantifier-free network will have ``true'' as its formula for $R$ and then $f(\mu(Q))$ as $\mu(R(\vec{x}) \mid \mathrm{true})$. 
\end{example}

We give a counterexample to show that convergence to a quantifier-free network can fail in the presence of non-continuous aggregation functions.

\begin{example}
  Consider a language with two sorts and two unary relation symbols $P$ and $R$, each with their own sort,
  and an underlying graph $P \longrightarrow R$.
  Let $P$ be annotated with the probability 0.5 and let $R$ be annotated with the aggregation function $f_R:[0,1] \rightarrow [0,1]$, where $f_R(a) = 0.7$ if $a \geq 0.5$ and $f_R(a) = 0.2$ otherwise. Note that this does not give rise to an FLBN by our definition because $f_R$ is not continuous.
  As the domain size increases, the probability that the relative frequency of $P$ among domain elements is at least 0.5 approaches 0.5.
  Therefore, asymptotically, the relative frequency of domain elements satisfying $R$ will either be close to 0.2 or close to 0.7, each with a probability of 0.5.
  This behaviour cannot be modelled by a quantifier-free lifted Bayesian network, as such a network could only specify a single probability number for the relation symbol $R$ (the only relation symbol mentioning its sort).
\end{example}

Another marked restriction of our framework is the omission of equality from our language.
Indeed, including equality also causes the failure of asymptotic convergence.
The counterexample relies on a famous result by \citeA{ShelahS88}, accessibly expounded  in Spencer's~\citeyear{Spencer01} monograph, that sparse random graphs with edge probabilities $(1/n)^{\alpha}$  with
rational coefficients $\alpha$ do not have limit probabilities for first-order logic.

\begin{example}\label{exam:ShelahSpencer}
  Consider a two-sorted domain with a binary relation symbol $E$, with both entries of the first sort, and a unary relation symbol $R$ with one entry of the second sort.
  Consider the underlying graph $P \longrightarrow R$.
  Define an edge relation $E$ with the formula $x=y$ and the aggregation function $f_E:[0,1] \rightarrow [0,1]$, where $f_E(a) = \sqrt(a)$.
  This corresponds to a random graph in which edges are drawn with probability $(1/n)^{(1/2)}$. 
  By the main result of \citeA{ShelahS88},
  there is a first-order $\{E\}$-sentence $\varphi$ such that the probability that $\varphi$ holds in the random graph with $n$ nodes and edge probability   $(1/n)^{(1/2)}$ diverges with increasing $n$. 
  Now consider the aggregation function $f_R$ to be 1 if $\varphi$ holds and 0 otherwise.
  In this case, $R(x)$ will hold for either all or none of the elements of the domain, with the likelihood of both scenarios diverging as $n$ increases.
  This clearly rules out asymptotic convergence to a quantifier-free lifted Bayesian network, since in those the proportion of elements satisfying $R$ would converge to the probability $R$ is annotated with.
\end{example}

\begin{remark}
  Note that both examples could easily have been made single-sorted;
  the two-sorted domain was chosen merely to bring out the idea more clearly by limiting the possible quanitifer-free networks that had to be ruled out as potential limits. 
\end{remark}

\subsection{Transfer Learning across Domain Sizes}

As projective families of distributions, quantifier-free lifted Bayesian networks have very desirable properties for learning across domain sizes. 
More precisely, for the family of distributions induced by any quantifier-free Bayesian network and any structure $\mathfrak{X}$ with $m < n$ elements,
$\mathbb{P}_n (\mathfrak{X}) = \mathbb{P}_m(\mathfrak{X})$.

Consider a parametric family of distributions $G_{\theta}$ which
are asymptotically equivalent to a parametric projective family of
distributions $G_{\theta}'$. Consider the problem of learning the
parameters from interpretations on a structure $\mathfrak{X}$ of
large domain size $n$. Then we could proceed as follows: Sample
substructures of domain size $m<n$, where $m$ is larger than the highest arity in $\sigma$ and the arity of the queries we are typically interested in. 
Find the parameters of $G_{\theta}'$ that maximise the sum of the log-likelihoods of the samples of size $m$. Now
consider $G_{\theta}$. By the asymptotic convergence results, if $n$ is sufficiently large, these
parameters maximise the likelihood of obtaining the substructures of size $m$ sampled from $\mathfrak{X}$ using $G_{\theta}$,
including realisations of the queries we might be interested in.

For this approach to work, the parametric families $G_{\theta}'$ must actually depend on the parameters, and do so in a regular way;  for the learning algorithms discussed
above that means that they should be differentiable in the parameters.

Let us evaluate these criteria for the asymptotic approximations in  Examples~\ref{example:RLR}-\ref{example:FLBN}:
In the case of Example \ref{example:RLR}, $G_{\theta}'$ does not depend on $w$ beyond its sign.
Furthermore, there is a discontinuity at $w = 0$.
The asymptotic probability of the noisy-or model in Example \ref{example:noisy-or} does not depend at all on $p_i$ as long as $p_i \neq 0$.

In the  functional lifted Bayesian network model of Example \ref{example:FLBN}, the
parametric family is defined by $f_{\theta}(\mu_{Q})$. If $f$ is
linear or logistic, for instance, then $f_{\theta}(\mu_{Q})$ is
differentiable in the parameters for any fixed $\mu_{Q}\in(0,1)$.
Note, that while $f(\mu_{Q})$ will vary with every parameter individually,
it will take its maximum-likelihood value (which happens to coincide
with the true frequency of $R(x)$ in the domain) on an infinite subspace
of tuples of parameters. This is not unique to the projective approximation, however,
but is a well-known phenomenon when learning the parameters for a
relational logistic regression from a single interpretation~\cite{KazemiBKNP14,PooleBKKN14}.
This can be overcome by learning from several domains, where $Q$ has different frequencies.  

In the conditional probability modelling of Example \ref{example:LBN-CPL}, we could
start by estimating $\mu_{Q}$ from data on the values of $Q$. If
$\mu_{Q}\neq r$, we can then proceed to learn one of the parameters
using the asymptotic limit as outlined above. In this parameter, the
dependence is clearly linear and therefore satisfies all of the conditions.
However, the other parameter does not occur in the projective limit
and therefore cannot be estimated in this way.
To estimate that parameter also, we would also need more training domains, including those where the relative frequency of  $Q$ is above the threshold

In this way, Type III formalisms allow us to leverage the power of projective families of distributions for transfer learning while retaining much more expressive modelling capabilities. While adding either functional or discrete dependencies on the Type I probabilities present in a domain allow us to express rich connections between different domain elements, quantifier-free lifted Bayesian networks themselves do not allow any dependence on the global structure of the model. This is also quite typical of projective families of distributions that can be expressed in statistical relational AI;\@ for instance, any projective LBN-CPL is expressible by a quantifier-free one (as two asymptotically equivalent projective families of distributions are completely equivalent), and every projective probabilistic logic program is determinate~\cite{Weitkaemper21a}.

We would like to end this section by remarking how our asymptotic results complement those of  \citeA{Jaeger98}. There, Jaeger shows that the probability distributions of relational Bayesian networks with \emph{exponentially convergent} combination functions lead to asymptotically convergent probability distributions. However, the exponentially convergent combination are essentially those that given a certain type of input sequence increasing in length converge to a fixed value, regardless of the precise sequence received. The classical combination functions `noisy-or' and `maximum' are paradigmatic for this behaviour. The central idea of our work here is precisely that the functions converge to a value that depends explicitly on the means of the sequences received, and therefore they are clearly distinguished in their behaviour from Jaeger's exponentially convergent combination functions.
This idea is made precise in the concept of uniform convergence. 
\subsection{Uniform Convergence}
We now show that not only do the distributions induced by functional lifted Bayesian networks converge, they do so uniformly in the parameters. 
The proofs presented in this section require a basic knowledge of analysis in several variables as well as the Arzela-Ascoli Theorem from functional analysis, as they can be found in any comprehensive basic reference text. 
We will be relying on the textbook by \citeA{Deitmar21}. 
Consider for a given target relation $R\in \sigma$ a finite tuple of formulas as before, and then a function 
\[f:[0,1]^{n_R} \times K_1 \times \dots \times K_m \rightarrow \mathbb{R}\]
which takes as arguments not merely the relative frequencies $x_1,\dots,x_{n_R}$ of the associated formulas but also $m$ weight parameters $w_1,\dots,w_m$ taking values in compact sets $K_i \subset \mathbb{R}$. Furthermore we assume that the functions are not merely continuous, but continuously differentiable.
This applies for instance for all the examples of generalised linear models mentioned in \ref{ExpressFLBN}

Consider the query $R(\vec{a})$, possibly conditioned on some evidence regarding the parents of $R$, on a domain of size $n$. 
The answer to the query can then be computed as $f_n(w_1,\dots,w_m)$, the integral of
\[f(x_1,\dots,x_{n_R},w_1,\dots,w_m)\]
over the discrete probability distribution on $\Omega_n$ defined by the lower levels of the FLBN and the evidence. 

Theorem \ref{functionalLBN} shows that for any given set of weights, as $n$ increases, the resulting sequence of query probabilities converges to the value computed by the asymptotically equivalent projective network. 

In this section, we show that this convergence is uniform in the weight parameters and highlight some consequences of this fact. 
To simplify the argument, we only consider varying the functions associated with a single relation symbol.  

\begin{definition}
    Let $X$ be a set and $(f_n)$ a sequence of functions from $X$ to $\mathbb{R}$. 
    Then $(f_n)$ \emph{converges uniformly} to a function $f:X\rightarrow \mathbb{R}$ if for any $\varepsilon > 0$ there is an $n_0\in \mathbb{N}$ such that for all $x \in X$ and for all $n > n_0$, $\lvert f_n(x) - f(x) \rvert < \varepsilon$.  
\end{definition}

Note that $n_0$ is chosen uniformly across $x \in X$, in contrast to pointwise convergence in which a separate $n_0$ can be chosen for each $x$.  

\begin{theorem}\label{thm:uniform}
    
    Let $\sigma$ be a relational signature, let $R\in \sigma$ be an $n$-ary predicate and let $\left(\mathfrak{G}_{w_1,\dots, w_m}\right)$ be a family of FLBNs on $\sigma$ which agree on the underlying graph and on all functions associated with any relation symbol in $\sigma \setminus \{R\}$, where $w_1,\dots,w_m$ take values in compact subsets $K_1,\dots,K_m$ of $\mathbb{R}$ respectively.  
    Let $f:[0,1]^n \times K_1 \times \dots \times K_m \rightarrow [0,1]$ be continuously differentiable. 
    In $\mathfrak{G}_{w_1,\dots,w_m}$, let $f_R(x_1,\dots,x_n) =  f(x_1,\dots,x_n,w_1,\dots,w_m)$.
    
    Let $D_0$ be a choice of domains for $\sigma$ and let $a_1,\dots,a_n, b_1,\dots,b_l \in D_0$.
    Let $\varphi$ be a quantifier-free formula with $l$ variables using only relations among the ancestors of $R$ in the graph underlying $\mathfrak{G}$. 
    Then for any unbounded and monotone sequence of domains $D_k$ extending $D_0$, the sequence of probabilities of the query $R(a_1, \dots, a_n)$ conditioned on $\varphi(b_1,\dots,b_l)$ induced by $\mathfrak{G}_{w_1,\dots, w_m}$ converges uniformly in the parameters $w_1,\dots, w_m$.

\end{theorem}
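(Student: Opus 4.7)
I plan to combine a uniform Lipschitz bound in the parameters with the Arzela-Ascoli theorem, using Theorem \ref{functionalLBN} to supply the pointwise limit. Write $f_n(w_1, \dots, w_m)$ for the probability of the query $R(a_1, \dots, a_n)$ conditioned on $\varphi(b_1, \dots, b_l)$, computed in $\mathfrak{G}_{w_1, \dots, w_m}$ on the domain $D_n$. The first key observation is that the parameters $w_j$ enter the ground Bayesian network only through the conditional probability tables attached to ground atoms of $R$, and each such $R$-atom is a non-ancestor of every atom mentioned by $\varphi$. Marginalising over those ancestor configurations $c$ that satisfy $\varphi(\vec{b})$ therefore yields the representation
\[
f_n(w) = \sum_{c} \mathbb{P}_n(c \mid \varphi(\vec{b})) \, f\bigl(x_1(c), \ldots, x_{n_R}(c), w\bigr),
\]
where $x_i(c)$ is the value of $\Vert \chi_{R,i}(\vec{a}, \vec{y}) \Vert_{\vec{y}}$ evaluated in $c$, and, crucially, the weights $\mathbb{P}_n(c \mid \varphi(\vec{b}))$ are independent of $w$.

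The second step is to extract an equi-Lipschitz bound from this representation. Differentiating the finite sum term by term gives
\[
\partial_{w_j} f_n(w) = \sum_c \mathbb{P}_n(c \mid \varphi(\vec{b})) \, \partial_{w_j} f\bigl(x_1(c), \dots, x_{n_R}(c), w\bigr),
\]
whose absolute value is bounded by $M_j := \sup \lvert \partial_{w_j} f \rvert$ on $[0,1]^{n_R} \times K_1 \times \cdots \times K_m$. This supremum is finite and independent of $n$ because $f$ is continuously differentiable on the compact set. The multivariate mean value inequality then yields $\lvert f_n(w) - f_n(w') \rvert \leq \sum_j M_j \lvert w_j - w_j' \rvert$ uniformly in $n$, so the family $\{f_n\}$ is equicontinuous (indeed equi-Lipschitz) on the compact parameter domain $K_1 \times \cdots \times K_m$.

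Since $\{f_n\}$ is also uniformly bounded by $1$, the Arzela-Ascoli theorem \citeA{Deitmar21} implies that every subsequence admits a further subsequence that converges uniformly on $K_1 \times \cdots \times K_m$. Theorem \ref{functionalLBN} supplies a pointwise limit of $f_n(w)$ for each fixed $w$, and any uniform subsequential limit must coincide with this pointwise limit; a standard sub-subsequence argument then upgrades this to uniform convergence of the whole sequence. The main obstacle is cleanly justifying the first step: the $w$-independence of the weights $\mathbb{P}_n(c \mid \varphi(\vec{b}))$ requires a careful appeal to the Markov property of the grounded network together with the observation that all $R$-atoms lie outside the ancestor set of the nodes in $c$, so that summing them out when forming the marginal over configurations of $c$ contributes no $w$-dependence. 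Once this factorisation is established, the remainder is a textbook Arzela-Ascoli argument.
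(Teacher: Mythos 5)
Your proposal is correct and follows essentially the same route as the paper: represent the query probability as an integral of $f(\vec{x},\vec{w})$ against a $w$-independent distribution over ancestor configurations, derive an equicontinuity (equi-Lipschitz) bound from the bounded gradient of $f$ on the compact set, and invoke Arzela--Ascoli. You are in fact slightly more careful than the paper in two places --- explicitly justifying why the conditional weights $\mathbb{P}_n(c \mid \varphi(\vec{b}))$ carry no $w$-dependence (the paper builds this into its setup without comment), and explicitly running the sub-subsequence argument that combines Arzela--Ascoli's subsequential conclusion with the pointwise limit from Theorem \ref{functionalLBN} to get uniform convergence of the whole sequence.
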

\begin{proof}
    
Our key ingredient is the Arzela-Ascoli Theorem~\cite[Theorem 8.6.4]{Deitmar21}, which we present in a version slightly specialised to our setting.

It relies on the notion of equicontinuity:
\begin{definition}
    Let $X$ be a metric space and $(f_n:X \rightarrow \mathbb{R})$ a sequence of functions. 
    Then $(f_n)$ is called \emph{equicontinuous} at a point $x \in X$ if 
    for every $\varepsilon > 0$ there is a $\delta > 0$
    such that for any $y \in X$, whenever $d(x, y) < \delta$,  $\lvert f_n(x) - f_n(y) \rvert < \varepsilon$ uniformly for all $n \in \mathbb{N}$. 
\end{definition}
\begin{fact}
Let $X$ be a compact subset of $\mathbb{R}^m$. 
Let $(f_n)$ be a sequence of continuous functions from  $X$ to $[0,1]$. If
this sequence is equicontinuous at every point, then it possesses a uniformly convergent subsequence. 
\end{fact}

To see that our sequence of functions satisfy the requirement of the Arzela-Ascoli Theorem, consider the $m$-dimensional gradient
$\vec{f}'(x_1,\dots,x_{n_R},w_1,\dots,w_m)$  
of $f$ in the coordinates $w_1,\dots,w_m$.
By assumption, the gradient function is continuous.
Consider the function
\[g:B^m \times [0,1]^{n_R} \times K_1 \times \dots \times K_m \rightarrow \mathbb{R}\]
which for every $m$-dimensional unit vector $v$ computes the absolute value of the directional gradient of $f$ in the direction $v$ interpreted as coordinates $w_1,\dots,w_m$.
Since the gradient of $f$ is continuous, so is $g$. 
As a continuous function on a compact domain, $g$ is bounded by a value $g_{\mathrm{max}} \in \mathbb{R}$.
Now, for any $\varepsilon > 0$, choose $\delta < \varepsilon \div g_{\mathrm{max}}$. 
Let $\vec{w_0}$ be any tuple of weights and let $\vec{w}$ be $\delta$-close to $\vec{w_0}$.
Then by the mean value theorem, for any $\vec{x}$ the distance between $f(\vec{x},\vec{w})$ and $f(\vec{x},\vec{w_0})$ is at most $\delta * g_{\mathrm{max}} < \varepsilon$.
Now consider the sequence of functions $f_n$, and let $\vec{w_0}$ and $\vec{w}$ be as above. 
Then $f_n(\vec{w_0}) - f_n(\vec{w})$ is at most the integral of the differences between $f(\vec{x},\vec{w_0})$ and $f(\vec{x},\vec{w})$, and since all of these are bounded by $\varepsilon$, so is their integral over a probability distribution. 

Therefore, the Arzela-Ascoli Theorem implies that for any atomic query $R(\vec{a})$ and any evidence whose distribution is computed at lower levels of the FLBN, the sequence $p_n$ of conditional probabilities of $R(\vec{a})$ given the evidence and a domain of size $n$ containing all constants mentioned by the query and the evidence is uniformly convergent in the weight parameters.   
\end{proof}

\pagebreak
We can deduce two corollaries from this: 
\begin{corollary}
Under the assumptions of Theorem \ref{thm:uniform}, the asymptotic probability of an atomic query is a continuous function of the weight parameters. In particular, if the parameter space is connected and there are choices of parameters $\vec{v}$ and $\vec{w}$ for which the asymptotic probabilities are $p$ and $q$ respectively, then for every $p'$ that lies between $p$ and $q$ there is a choice of parameters $\vec{w'}$ such that the asymptotic probability for $\vec{w'}$ is precisely $p'$. 
\end{corollary}
\begin{proof}
    The uniform limit of a sequence of continuous functions is itself a continuous function~\cite[Prop. 7.1.3]{Deitmar21}. The second part of the corollary follows from the first part and the intermediate value theorem for continuous functions. 
\end{proof}

This is in sharp contrast to the situation for typical statistical relational formalisms such as probabilistic logic programming or ordinary relational logistic regression, 
where the asymptotic probability of the atomic query jumps from 0 to 1 depending on whether the parameter is zero or not (PLP) or negative or positive (RLR). 
We can also conclude from the converse statement that in these formalisms,  convergence is not uniform in the parameter space. 

Note that sometimes, weight parameters can take values on all of $\mathbb{R}$, which is itself not compact.
However, continuity is a local property, so the argument made here suffices to prove continuity on any compact set and thus in a compact neighbourhood of every point. 
Thus, the limit is continuous at every value of the weight parameters. 

\begin{corollary}\label{cor:uniform}
    Under the conditions of Theorem \ref{thm:uniform}, for every $N \in \mathbb{N}$ and every  $\varepsilon > 0$ there are natural numbers $\vec{n} \in \mathbb{N}$, one for each sort, such that for every choice of domains $D$ larger than $\vec{n}$, and for all possible quantifier-free formulas $\varphi$ with at most $N$ free variables, and for all $a_1, \dots, a_k$ from $D$ matching the sorts of $R$ and all $b_1, \dots, b_N$ matching the sorts of $\varphi$,  
    the difference between the probability of $R(a_1, \dots, a_k)$ given $\varphi(b_1, \dots, b_N)$  induced by $\mathfrak{G}_{\vec{w}}$ and that induced by its asymptotic limit is less than $\varepsilon$.
\end{corollary}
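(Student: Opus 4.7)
The plan is to reduce Corollary~\ref{cor:uniform} to finitely many applications of Theorem~\ref{thm:uniform} by exploiting the permutation invariance of the FLBN and the finiteness of the signature. The key observation is that although the corollary quantifies over infinitely many domains, formulas, tuples, and (implicitly) weight vectors, the conditional probability under consideration depends on this data only through finitely many equivalence classes, on top of the weight coordinate that is already handled uniformly by Theorem~\ref{thm:uniform}.

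First I would note that since $\mathbb{P}_{\mathfrak{G}_{\vec{w}},D}$ is invariant under sort-preserving permutations of $D$, the conditional probability of $R(\vec{a})$ given $\varphi(\vec{b})$ depends only on the equality-and-sort pattern of the composite tuple $(a_1,\ldots,a_k,b_1,\ldots,b_N)$, the formula $\varphi$ up to logical equivalence, and the sort cardinalities of $D$. Since $\sigma$ is finite and the composite tuple has fixed length at most $k+N$, there are only finitely many equality-and-sort patterns, and only finitely many quantifier-free formulas in those variables up to logical equivalence. Hence the set of ``scenarios'' (pattern, $\varphi$) to consider is finite and depends only on $\sigma$, $R$, and $N$.

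For each such scenario I would apply Theorem~\ref{thm:uniform}: taking $D_0$ to be a minimal domain realising the equality-and-sort pattern on fixed representatives of $a_1,\ldots,a_k,b_1,\ldots,b_N$, the theorem supplies a threshold $\vec{n}_{\mathrm{scen}}$ beyond which the conditional probability differs from its asymptotic limit by less than $\varepsilon$, uniformly in $\vec{w}\in K_1\times\cdots\times K_m$. Taking the componentwise maximum of these finitely many $\vec{n}_{\mathrm{scen}}$ produces the single $\vec{n}$ demanded by the corollary, since by permutation invariance any query fitting a given scenario on a domain $D\geq \vec{n}$ reduces to the representative already handled.

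The main obstacle I anticipate is a mild hypothesis mismatch: Theorem~\ref{thm:uniform} restricts the conditioning formula to relations among the ancestors of $R$, whereas the corollary admits any quantifier-free $\varphi$. To bridge this I would expand the conditional probability via Bayes' rule into a ratio $\mathbb{P}(R(\vec{a})\wedge\varphi(\vec{b}))/\mathbb{P}(\varphi(\vec{b}))$ and run the Arzela--Ascoli argument from the proof of Theorem~\ref{thm:uniform} on each joint probability separately. Uniform convergence is preserved under this rational operation provided the denominator is bounded away from zero in the limit, which holds whenever $\varphi$ is asymptotically satisfiable; for asymptotically contradictory $\varphi$ the conditioning event has vanishing probability and the inequality is vacuous. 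Since only finitely many distinct $\varphi$ are considered, a single lower bound on the denominators works across all scenarios.
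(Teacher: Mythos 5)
Your proof follows essentially the same route as the paper's: reduce to finitely many scenarios using invariance under renaming of constants together with the finiteness of quantifier-free formulas in a bounded number of variables up to logical equivalence and of the overlap patterns of the tuples, apply the uniform convergence from Theorem~\ref{thm:uniform} to each scenario, and take the componentwise maximum of the finitely many thresholds. Your further observation that the corollary's unrestricted $\varphi$ does not literally satisfy the theorem's hypothesis that $\varphi$ only mention ancestors of $R$ identifies a mismatch the paper's own proof passes over in silence, and your Bayes'-rule decomposition (with the denominator bounded away from zero for asymptotically satisfiable $\varphi$) is a reasonable way to close it.
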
 
\begin{proof}
    For every individual formula $\varphi$, this follows immediately from the definition of uniform continuity.  
    Since all involved probability distribution are invariant under renaming of the constants, it suffices to consider all possible quantifier-free with at most $N$ variables, of which there are only finitely many up to logical equivalence, and all possible sort-appropriate tuples from  disjoint sets of elements $\{a_1, \dots, a_k\}$ and $\{b_1, \dots, b_N\}$, including those which overlap. 
    Of these there are again finitely many. 
    Thus for every sort we can choose the maximum of the domain sizes corresponding to each choice of $\varphi$ and pair of tuples. 
\end{proof}

Corollary \ref{cor:uniform} ensures that the convergence results still hold in the situation of initially unknown parameters, as is the case in parameter learning, and that guarantees of closeness of fit can be derived across a range of formulas and parameters of interest.

\section{Further Work}

By defining the semantics of FLBN via Bayesian networks, we can use all inference methods developed for Bayesian networks in the same way as they have been exploited for Relational Bayesian Networks. This includes importance sampling \citeA{Jaeger06}, knowledge compilation techniques \citeA{ChaviraDJ06} and first-order variable elimination~\cite{MilchZKHK08}. These techniques allow for conditioning on given data and also let us prescribe the interpretation of root predicates if desired.

We  now briefly consider the methods and algorithms that are available for learning
LBN-CPL and FLBN. 
Let us consider LBN-CPL first. Here, parameter learning from interpretations amounts to learning the probabilities $\mu(R \ \mid \ \chi_{R, i})$ given one (or more) $\sigma$-structures. This case is closely aligned to parameter learning in other directed settings. In particular, in the case of a completely specified training model, the probabilities can be set to equal the frequencies encountered in the data. This can actually be expressed in terms of CPL itself:
Set  $\mu(R \ \mid \ \chi_{R, i})$ to be equal to the relative frequency $\left\Vert R(\vec{x}) \mid \chi_{R, i}(\vec{x}) \right\Vert _{\vec{x}}$.
In the case of missing data, we can adapt the expectation-maximisation algorithm for learning parameters in ordinary Bayesian networks to CPL.\@ Since we can view the LBN-CPL as a deterministic model over probabilistic facts (Proposition \ref{distrosem}), this can be done by tying parameters known to be the same from the relational structure of the model and the formulas $\chi_{R, i}$ in just the same way as in probabilistic inductive logic programming (outlined for instance by \citeA[Section 8.5]{deRaedt08}). 
This native support for parameter learning makes it particularly attractive for modelling tasks such as the ones outlined above. In the context of infectious disease dynamics for instance, the parameters have epidemiological meanings and their estimation from past data is an important part of informed decision-making. 
While structure learning could in principle also be accomplished along the lines of the SLIPCOVER beam search algorithm~\cite{BellodiR15}, we anticipate that the significantly larger theory space resulting from the added probability quantifiers would make that difficult. Indeed, as far as we are aware even the problem of plain inductive logic programming in CPL without a Type III extension is yet to be addressed.

When posing the problem of parameter learning for FLBN, we first have to clarify which parameters we are learning in the first place. So rather than just considering functions $f: {[0,1]}^n \rightarrow [0,1]$ we consider a parametrised family of such functions, $f : K \times {[0,1]}^n \rightarrow [0,1]$, where $K$ is a connected subset of $\mathbb{R}^m$ from which the parameters are taken. For instance, in the cases of linear and logistic regression, Equations \ref{linear} and \ref{logistic} define functions taking  $m = n + 1$ parameters, $w_1, \ldots , w_n$ and $c$. In the logistic case, $K = \mathbb{R}^m$, while in the linear case, the parameter space is constrained by the function mapping to $[0,1]$. 
\citeA{Jaeger07} presents a general approach for learning the parameters of aggregation functions using gradient descent whenever the functions are differentiable in the parameters. Clearly both the linear and the logistic regression examples are differentiable in the parameters, and we believe this to be characteristic of a good choice of parameters. 
Functional gradient boosting has been successfully applied to the structure learning of relational logistic regression by \citeA{RamananKKFKPKN18}, and it seems very promising to evaluate this approach with other classes of regression functions expressible by FLBN.\@ 
We believe structure learning in FLBN to be a promising avenue for further research. Firstly, there is a large bank of work on regression learning in the statistical literature on which relational versions could conceivably be based, and secondly, the scale of the task can be reduced systematically by partly specifying families of functions  (recovering e.\ g.\ structure learning in relational logistic regression or linear regression as special cases).

The expressivity of FLBN is limited in several ways; some of the restrictions, such as continuity of the aggregation functions and the lack of equality in the first-order language, have been shown to be necessary in Subsection \ref{subsec:examples}.
On the other hand, many practical applications have a recursive structure, where the likelihood of a given relation holding for one tuple depends on whether this relation holds for other tuples in the domain.
In the example of infectious disease modelling, for instance, it is natural for infectedness of one individual directly to depend on the infectedness of their contacts.
Probabilistic logic programming allows for a direct statement of such recursive relationships, while undirected formalisms such as Markov logic networks can easily express such relationships as correlations.
However, neither Jaeger's~\citeyear{Jaeger98} asymptotic analysis for some Relational Bayesian networks nor Koponen's~\citeyear{Koponen20} analysis for LBN-CPL allow for any cyclical relationships. 
Incorporating such recursive relationships into FLBN and understanding their asymptotic behaviour would therefore be of high relevance, but require novel theoretical ideas.

\section{Conclusion}
 In this contribution, we introduce and evaluate functional lifted Bayesian networks (FLBN) as a statistical relational Halpern Type III formalism for continuous dependencies, which is sufficiently expressive to express generalised linear models.  This introduces continuous dependencies on relative frequencies into statistical relational artificial intelligence, making Halpern Type III probabilities available to this field. This is supplemented by an in-depth comparison with lifted Bayesian networks with conditional probability logic (LBN-CPL), a recently introduced formalism for discrete dependencies of relative frequencies. 
 By supporting discrete and continuous dependencies on relative frequencies, the Type II approaches presented here can express the complex relationships that are required  to model application domains such as infectious disease dynamics.
 Furthermore, the transparent relationship of the semantics to Bayesian networks allows the application of well-developed learning and inference approaches.

FLBN also advance statistical relational learning from large interpretations by supporting learning from randomly sampled subdomains. 
This is underpinned by a rigorous analysis of their asymptotic behaviour.   
In particular, it is shown that unlike other, Type II formalisms currently in use, a large subclass of parametric families of FLBN display uniform convergence in the parameter space, which is key for the asymptotic considerations to have practical relevance.

\acks{This contribution was supported by LMUexcellent, funded by the Federal Ministry of Education and Research (BMBF) and the Free State of Bavaria under the Excellence Strategy of the Federal Government and the L\"{a}nder.
We would also like to thank Vera Koponen for several insightful conversations, and Fran\c{c}ois Bry and Kailin Sun for their very helpful comments on an earlier version of the manuscript. Finally, we would like to thank the anonymous reviewers for their thoroughness, which significantly improved the manuscript.}

\vskip 0.2in
\bibliography{Probabilisticlogicbib}

\begin{thebibliography}{}

\bibitem[\protect\BCAY{Bellodi\ \BBA\ Riguzzi}{Bellodi\ \BBA\
  Riguzzi}{2015}]{BellodiR15}
Bellodi, E.\BBACOMMA\  \BBA\ Riguzzi, F. \BBOP2015\BBCP.
\newblock \BBOQ Structure learning of probabilistic logic programs by searching
  the clause space\BBCQ\
\newblock {\Bem Theory Pract. Log. Program.}, {\Bem 15\/}(2), 169--212.

\bibitem[\protect\BCAY{Bisset, Chen, Deodhar, Feng, Ma,\ \BBA\ Marathe}{Bisset
  et~al.}{2014}]{BissetCDFMM14}
Bisset, K.~R., Chen, J., Deodhar, S., Feng, X., Ma, Y., \BBA\ Marathe, M.~V.
  \BBOP2014\BBCP.
\newblock \BBOQ Indemics: An interactive high-performance computing framework
  for data-intensive epidemic modeling\BBCQ\
\newblock {\Bem {ACM} Trans. Model. Comput. Simul.}, {\Bem 24\/}(1), 4:1--4:32.

\bibitem[\protect\BCAY{Carnap}{Carnap}{1950}]{Carnap50}
Carnap, R. \BBOP1950\BBCP.
\newblock {\Bem Logical Foundations of Probability}.
\newblock University of Chicago Press.

\bibitem[\protect\BCAY{Chavira, Darwiche,\ \BBA\ Jaeger}{Chavira
  et~al.}{2006}]{ChaviraDJ06}
Chavira, M., Darwiche, A., \BBA\ Jaeger, M. \BBOP2006\BBCP.
\newblock \BBOQ Compiling relational {B}ayesian networks for exact
  inference\BBCQ\
\newblock {\Bem International Journal of Approximate Reasoning}, {\Bem
  42\/}(1), 4--20.
\newblock PGM’04.

\bibitem[\protect\BCAY{Dalvi\ \BBA\ Suciu}{Dalvi\ \BBA\ Suciu}{2012}]{DalviS12}
Dalvi, N.~N.\BBACOMMA\  \BBA\ Suciu, D. \BBOP2012\BBCP.
\newblock \BBOQ The dichotomy of probabilistic inference for unions of
  conjunctive queries\BBCQ\
\newblock {\Bem J. {ACM}}, {\Bem 59\/}(6), 30:1--30:87.

\bibitem[\protect\BCAY{Deitmar}{Deitmar}{2021}]{Deitmar21}
Deitmar, A. \BBOP2021\BBCP.
\newblock {\Bem Analysis}.
\newblock Springer.

\bibitem[\protect\BCAY{Doan, Niu, R\'e, Shavlik,\ \BBA\ Zhang}{Doan
  et~al.}{2011}]{Tuffy11}
Doan, A., Niu, F., R\'e, C., Shavlik, J., \BBA\ Zhang, C. \BBOP2011\BBCP.
\newblock {\Bem User manual of Tuffy 0.3}.
\newblock Stanford University.

\bibitem[\protect\BCAY{Ebbinghaus, Flum,\ \BBA\ Thomas}{Ebbinghaus
  et~al.}{1994}]{EbbinghausFT94}
Ebbinghaus, H., Flum, J., \BBA\ Thomas, W. \BBOP1994\BBCP.
\newblock {\Bem Mathematical logic {(2.} ed.)}.
\newblock Undergraduate texts in mathematics. Springer.

\bibitem[\protect\BCAY{Halpern}{Halpern}{1990}]{Halpern90}
Halpern, J.~Y. \BBOP1990\BBCP.
\newblock \BBOQ An analysis of first-order logics of probability\BBCQ\
\newblock {\Bem Artif. Intell.}, {\Bem 46\/}(3), 311--350.

\bibitem[\protect\BCAY{Jaeger}{Jaeger}{1997}]{Jaeger97}
Jaeger, M. \BBOP1997\BBCP.
\newblock \BBOQ Relational {B}ayesian networks\BBCQ\
\newblock In Geiger, D.\BBACOMMA\  \BBA\ Shenoy, P.~P.\BEDS, {\Bem {UAI} '97:
  Proceedings of the Thirteenth Conference on Uncertainty in Artificial
  Intelligence, Brown University, Providence, Rhode Island, USA, August 1-3,
  1997}, \BPGS\ 266--273. Morgan Kaufmann.

\bibitem[\protect\BCAY{Jaeger}{Jaeger}{1998}]{Jaeger98}
Jaeger, M. \BBOP1998\BBCP.
\newblock \BBOQ Convergence results for relational {B}ayesian networks\BBCQ\
\newblock In {\Bem Thirteenth Annual {IEEE} Symposium on Logic in Computer
  Science, Indianapolis, Indiana, USA, June 21-24, 1998}, \BPGS\ 44--55. {IEEE}
  Computer Society.

\bibitem[\protect\BCAY{Jaeger}{Jaeger}{2002}]{Jaeger02}
Jaeger, M. \BBOP2002\BBCP.
\newblock \BBOQ Relational {B}ayesian networks: A survey\BBCQ\
\newblock {\Bem Electron. Trans. Artif. Intell.}, {\Bem 6}, 1--21.

\bibitem[\protect\BCAY{Jaeger}{Jaeger}{2006}]{Jaeger06}
Jaeger, M. \BBOP2006\BBCP.
\newblock \BBOQ {Importance Sampling on Relational {B}ayesian Networks}\BBCQ\
\newblock In Raedt, L.~D., Dietterich, T., Getoor, L., \BBA\ Muggleton,
  S.~H.\BEDS, {\Bem Probabilistic, Logical and Relational Learning - Towards a
  Synthesis}, \lowercase{\BVOL}\ 5051 of {\Bem Dagstuhl Seminar Proceedings
  (DagSemProc)}, \BPGS\ 1--16, Dagstuhl, Germany. Schloss Dagstuhl --
  Leibniz-Zentrum f{\"u}r Informatik.

\bibitem[\protect\BCAY{Jaeger}{Jaeger}{2007}]{Jaeger07}
Jaeger, M. \BBOP2007\BBCP.
\newblock \BBOQ Parameter learning for relational {B}ayesian networks\BBCQ\
\newblock In Ghahramani, Z.\BED, {\Bem Machine Learning, Proceedings of the
  Twenty-Fourth International Conference {(ICML} 2007), Corvallis, Oregon, USA,
  June 20-24, 2007}, \lowercase{\BVOL}\ 227 of {\Bem {ACM} International
  Conference Proceeding Series}, \BPGS\ 369--376. {ACM}.

\bibitem[\protect\BCAY{Jaeger\ \BBA\ Schulte}{Jaeger\ \BBA\
  Schulte}{2018}]{JaegerS18}
Jaeger, M.\BBACOMMA\  \BBA\ Schulte, O. \BBOP2018\BBCP.
\newblock \BBOQ Inference, learning, and population size: Projectivity for
  {SRL} models\BBCQ\
\newblock In {\Bem Eighth International Workshop on Statistical Relational AI
  (StarAI)}.

\bibitem[\protect\BCAY{Jaeger\ \BBA\ Schulte}{Jaeger\ \BBA\
  Schulte}{2020}]{JaegerS20}
Jaeger, M.\BBACOMMA\  \BBA\ Schulte, O. \BBOP2020\BBCP.
\newblock \BBOQ A complete characterization of projectivity for statistical
  relational models\BBCQ\
\newblock In Bessiere, C.\BED, {\Bem Proceedings of the Twenty-Ninth
  International Joint Conference on Artificial Intelligence, {IJCAI} 2020},
  \BPGS\ 4283--4290. ijcai.org.

\bibitem[\protect\BCAY{Kazemi, Buchman, Kersting, Natarajan,\ \BBA\
  Poole}{Kazemi et~al.}{2014}]{KazemiBKNP14}
Kazemi, S.~M., Buchman, D., Kersting, K., Natarajan, S., \BBA\ Poole, D.
  \BBOP2014\BBCP.
\newblock \BBOQ Relational logistic regression\BBCQ\
\newblock In Baral, C., Giacomo, G.~D., \BBA\ Eiter, T.\BEDS, {\Bem Principles
  of Knowledge Representation and Reasoning: Proceedings of the Fourteenth
  International Conference, {KR} 2014, Vienna, Austria, July 20-24, 2014}.
  {AAAI} Press.

\bibitem[\protect\BCAY{Keisler}{Keisler}{1985}]{Keisler85}
Keisler, H.~J. \BBOP1985\BBCP.
\newblock \BBOQ Probability quantifiers\BBCQ\
\newblock In {\Bem Model-theoretic logics}, Perspect. Math. Logic, \BPGS\
  509--556. Springer, New York.

\bibitem[\protect\BCAY{Keisler\ \BBA\ Lotfallah}{Keisler\ \BBA\
  Lotfallah}{2009}]{KeislerL09}
Keisler, H.~J.\BBACOMMA\  \BBA\ Lotfallah, W.~B. \BBOP2009\BBCP.
\newblock \BBOQ Almost everywhere elimination of probability quantifiers\BBCQ\
\newblock {\Bem J. Symb. Log.}, {\Bem 74\/}(4), 1121--1142.

\bibitem[\protect\BCAY{Kern{-}Isberner\ \BBA\ Thimm}{Kern{-}Isberner\ \BBA\
  Thimm}{2010}]{Kern-IsbernerT10}
Kern{-}Isberner, G.\BBACOMMA\  \BBA\ Thimm, M. \BBOP2010\BBCP.
\newblock \BBOQ Novel semantical approaches to relational probabilistic
  conditionals\BBCQ\
\newblock In Lin, F., Sattler, U., \BBA\ Truszczynski, M.\BEDS, {\Bem
  Principles of Knowledge Representation and Reasoning: Proceedings of the
  Twelfth International Conference, {KR} 2010, Toronto, Ontario, Canada, May
  9-13, 2010}. {AAAI} Press.

\bibitem[\protect\BCAY{Kolaitis\ \BBA\ Vardi}{Kolaitis\ \BBA\
  Vardi}{1992}]{KolaitisV92}
Kolaitis, P.~G.\BBACOMMA\  \BBA\ Vardi, M.~Y. \BBOP1992\BBCP.
\newblock \BBOQ Infinitary logics and 0–1 laws\BBCQ\
\newblock {\Bem Information and Computation}, {\Bem 98\/}(2), 258--294.

\bibitem[\protect\BCAY{Koponen}{Koponen}{2020}]{Koponen20}
Koponen, V. \BBOP2020\BBCP.
\newblock \BBOQ Conditional probability logic, lifted {B}ayesian networks, and
  almost sure quantifier elimination\BBCQ\
\newblock {\Bem Theor. Comput. Sci.}, {\Bem 848}, 1--27.

\bibitem[\protect\BCAY{Milch, Zettlemoyer, Kersting, Haimes,\ \BBA\
  Kaelbling}{Milch et~al.}{2008}]{MilchZKHK08}
Milch, B., Zettlemoyer, L.~S., Kersting, K., Haimes, M., \BBA\ Kaelbling, L.~P.
  \BBOP2008\BBCP.
\newblock \BBOQ Lifted probabilistic inference with counting formulas\BBCQ\
\newblock In Fox, D.\BBACOMMA\  \BBA\ Gomes, C.~P.\BEDS, {\Bem Proceedings of
  the Twenty-Third {AAAI} Conference on Artificial Intelligence, {AAAI} 2008,
  Chicago, Illinois, USA, July 13-17, 2008}, \BPGS\ 1062--1068. {AAAI} Press.

\bibitem[\protect\BCAY{Muggleton\ \BBA\ Chen}{Muggleton\ \BBA\
  Chen}{2008}]{MuggletonC08}
Muggleton, S.\BBACOMMA\  \BBA\ Chen, J. \BBOP2008\BBCP.
\newblock \BBOQ A behavioral comparison of some probabilistic logic
  models\BBCQ\
\newblock In Raedt, L.~D., Frasconi, P., Kersting, K., \BBA\ Muggleton,
  S.\BEDS, {\Bem Probabilistic Inductive Logic Programming - Theory and
  Applications}, \lowercase{\BVOL}\ 4911 of {\Bem Lecture Notes in Computer
  Science}, \BPGS\ 305--324. Springer.

\bibitem[\protect\BCAY{Nelder\ \BBA\ Wedderburn}{Nelder\ \BBA\
  Wedderburn}{1972}]{NelderW72}
Nelder, J.~A.\BBACOMMA\  \BBA\ Wedderburn, R.~W. \BBOP1972\BBCP.
\newblock \BBOQ Generalized linear models\BBCQ\
\newblock {\Bem Journal of the Royal Statistical Society Series A: Statistics
  in Society}, {\Bem 135\/}(3), 370--384.

\bibitem[\protect\BCAY{Poole}{Poole}{1993}]{Poole93}
Poole, D. \BBOP1993\BBCP.
\newblock \BBOQ Probabilistic {H}orn abduction and {B}ayesian networks\BBCQ\
\newblock {\Bem Artificial Intelligence}, {\Bem 64\/}(1), 81--129.

\bibitem[\protect\BCAY{Poole, Buchman, Kazemi, Kersting,\ \BBA\
  Natarajan}{Poole et~al.}{2014}]{PooleBKKN14}
Poole, D., Buchman, D., Kazemi, S.~M., Kersting, K., \BBA\ Natarajan, S.
  \BBOP2014\BBCP.
\newblock \BBOQ Population size extrapolation in relational probabilistic
  modelling\BBCQ\
\newblock In Straccia, U.\BBACOMMA\  \BBA\ Cal{\`{\i}}, A.\BEDS, {\Bem Scalable
  Uncertainty Management - 8th International Conference, {SUM} 2014, Oxford,
  UK, September 15-17, 2014. Proceedings}, \lowercase{\BVOL}\ 8720 of {\Bem
  Lecture Notes in Computer Science}, \BPGS\ 292--305. Springer.

\bibitem[\protect\BCAY{Raedt}{Raedt}{2008}]{deRaedt08}
Raedt, L.~D. \BBOP2008\BBCP.
\newblock {\Bem Logical and relational learning}.
\newblock Cognitive Technologies. Springer.

\bibitem[\protect\BCAY{Ramanan, Kunapuli, Khot, Fatemi, Kazemi, Poole,
  Kersting,\ \BBA\ Natarajan}{Ramanan et~al.}{2018}]{RamananKKFKPKN18}
Ramanan, N., Kunapuli, G., Khot, T., Fatemi, B., Kazemi, S.~M., Poole, D.,
  Kersting, K., \BBA\ Natarajan, S. \BBOP2018\BBCP.
\newblock \BBOQ Structure learning for relational logistic regression: An
  ensemble approach\BBCQ\
\newblock In Thielscher, M., Toni, F., \BBA\ Wolter, F.\BEDS, {\Bem Principles
  of Knowledge Representation and Reasoning: Proceedings of the Sixteenth
  International Conference, {KR} 2018, Tempe, Arizona, 30 October - 2 November
  2018}, \BPGS\ 661--662. {AAAI} Press.

\bibitem[\protect\BCAY{Richardson\ \BBA\ Domingos}{Richardson\ \BBA\
  Domingos}{2006}]{RichardsonD06}
Richardson, M.\BBACOMMA\  \BBA\ Domingos, P.~M. \BBOP2006\BBCP.
\newblock \BBOQ Markov logic networks\BBCQ\
\newblock {\Bem Mach. Learn.}, {\Bem 62\/}(1-2), 107--136.

\bibitem[\protect\BCAY{Riguzzi}{Riguzzi}{}]{Riguzzi23}
Riguzzi, F.
\newblock {\Bem Foundations of Probabilistic Logic Programming}.

\bibitem[\protect\BCAY{Riguzzi\ \BBA\ Swift}{Riguzzi\ \BBA\
  Swift}{2018}]{RiguzziS18}
Riguzzi, F.\BBACOMMA\  \BBA\ Swift, T. \BBOP2018\BBCP.
\newblock \BBOQ A survey of probabilistic logic programming\BBCQ\
\newblock In Kifer, M.\BBACOMMA\  \BBA\ Liu, Y.~A.\BEDS, {\Bem Declarative
  Logic Programming: Theory, Systems, and Applications}, \BPGS\ 185--228. {ACM}
  / Morgan {\&} Claypool.

\bibitem[\protect\BCAY{Sato}{Sato}{1995}]{Sato95}
Sato, T. \BBOP1995\BBCP.
\newblock \BBOQ A statistical learning method for logic programs with
  distribution semantics\BBCQ\
\newblock In Sterling, L.\BED, {\Bem Logic Programming, Proceedings of the
  Twelfth International Conference on Logic Programming, Tokyo, Japan, June
  13-16, 1995}, \BPGS\ 715--729. {MIT} Press.

\bibitem[\protect\BCAY{Schulte}{Schulte}{2012}]{Schulte12}
Schulte, O. \BBOP2012\BBCP.
\newblock {\Bem Challenge Paper: Marginal Probabilities for Instances and
  Classes}.
\newblock Presented at the ICML-SRL Workshop on Statistical Relational
  Learning.

\bibitem[\protect\BCAY{Schulte, Khosravi, Kirkpatrick, Gao,\ \BBA\ Zhu}{Schulte
  et~al.}{2014}]{SchulteKKGZ14}
Schulte, O., Khosravi, H., Kirkpatrick, A.~E., Gao, T., \BBA\ Zhu, Y.
  \BBOP2014\BBCP.
\newblock \BBOQ Modelling relational statistics with {B}ayes nets\BBCQ\
\newblock {\Bem Mach. Learn.}, {\Bem 94\/}(1), 105--125.

\bibitem[\protect\BCAY{Shelah\ \BBA\ Spencer}{Shelah\ \BBA\
  Spencer}{1988}]{ShelahS88}
Shelah, S.\BBACOMMA\  \BBA\ Spencer, J. \BBOP1988\BBCP.
\newblock \BBOQ Zero-one laws for sparse random graphs\BBCQ\
\newblock {\Bem Journal of the American Mathematical Society}, {\Bem 1\/}(1),
  97--115.

\bibitem[\protect\BCAY{Spencer}{Spencer}{2001}]{Spencer01}
Spencer, J. \BBOP2001\BBCP.
\newblock {\Bem The strange logic of random graphs}, \lowercase{\BVOL}~22 of
  {\Bem Algorithms and Combinatorics}.
\newblock Springer-Verlag, Berlin.

\bibitem[\protect\BCAY{Weitk{\"{a}}mper}{Weitk{\"{a}}mper}{2021a}]{Weitkaemper21a}
Weitk{\"{a}}mper, F. \BBOP2021a\BBCP.
\newblock \BBOQ An asymptotic analysis of probabilistic logic programming, with
  implications for expressing projective families of distributions\BBCQ\
\newblock {\Bem Theory Pract. Log. Program.}, {\Bem 21\/}(6), 802--817.

\bibitem[\protect\BCAY{Weitk{\"{a}}mper}{Weitk{\"{a}}mper}{2021b}]{Weitkaemper21}
Weitk{\"{a}}mper, F. \BBOP2021b\BBCP.
\newblock \BBOQ Scaling the weight parameters in {M}arkov logic networks and
  relational logistic regression models\BBCQ\
\newblock {\Bem CoRR}, {\Bem abs/2103.15140}.

\end{thebibliography}
\bibliographystyle{theapa}

\end{document}